\newtheorem{theorem}{Theorem}
\newtheorem{proof}{Proof}
\newtheorem{definition}{Definition}
\begin{document}
\title{Learner Referral for Cost-Effective Federated Learning Over Hierarchical IoT Networks}
\author{Yulan Gao, \IEEEmembership{~Member, IEEE,}
Ziqiang Ye, \IEEEmembership{}
Yue Xiao, \IEEEmembership{~Member, IEEE,}
and Wei Xiang, \IEEEmembership{~Senior Member, IEEE,}
\thanks{Y. Gao, Z. Ye, and Y. Xiao are with the National Key Laboratory of Science and Technology on Communications, University of Electronic Science and Technology of China (UESTC), Chengdu 611731, China (e-mail: yulangaomath@163.com; yysxiaoyu@hotmail.com; xiaoyue@uestc.edu.cn).}
\thanks{W. Xiang is with the School of Engineering and Mathematical Sciences, La Trobe University, Victoria 3086, Australia (e-mail: W.Xiang@latrobe.edu.au).}
}
\markboth{IEEE Internet of Things Journal}
{Shell \MakeLowercase{\textit{et al.}}: }
\maketitle
\begin{abstract}
The paradigm of federated learning (FL) to address data privacy concerns by locally training parameters on resource-constrained clients in a distributed manner has garnered significant attention.
Nonetheless, FL is not applicable when not all clients within the coverage of the FL server are registered with the FL network.
To bridge this gap, this paper proposes joint learner referral aided federated client selection (LRef-FedCS), along with communications and computing resource scheduling, and local model accuracy optimization (LMAO) methods.
These methods are designed to minimize the cost incurred by the worst-case participant and ensure the long-term fairness of FL in hierarchical Internet of Things (HieIoT) networks.
Utilizing the Lyapunov optimization technique, we reformulate the original problem into a stepwise joint optimization problem (JOP).
Subsequently, to tackle the mixed-integer non-convex JOP, we separatively and iteratively address LRef-FedCS and LMAO through the centralized method and self-adaptive global best harmony search (SGHS) algorithm, respectively.
To enhance scalability, we further propose a distributed LRef-FedCS approach based on a matching game to replace the  centralized method described above.
Numerical simulations and experimental results on the MNIST/CIFAR-10 datasets demonstrate that our proposed LRef-FedCS approach could achieve a good balance between pursuing high global accuracy and reducing cost.
\end{abstract}
\begin{IEEEkeywords}
Federated learning, learner referral, client selection, matching game.
\end{IEEEkeywords}
\IEEEpeerreviewmaketitle

\section{Introduction}
\IEEEPARstart{F}{ederated} learning (FL) is a transformative computational stategy, conceived within the vast landscape of artificial intelligence (AI), to accomodate the intensifying need for data privacy.
This inventive framework is predicated on the essential concept of computation decentralization, thereby shifting the computational workload from the central server onto a network of local devices ({\em a.k.a.} clients and workers) \cite{yang2019federated}.
By doing so, privacy preservation is ensured for data owners, as FL avoids transmit transporting end devices' raw data to the central server.
From an application standpoint, given the inevitable proliferation of Internet of Things (IoT) devices (e.g., mobile phone, iPad, laptop, vehicles, etc.), we are witnessing a measured yet steady ascent of collaborative training parameters \cite{konevcny2016federated,mcmahan2017communication}.
For instance, UberEATs (\texttt{\url{https://en.wikipedia.org/wiki/Uber_Eats}}), launched by Uber in 2014, has adopted the FL paradigm  to improve on-time delivery while simultaneously protecting data privacy of  both customers and delivery partners \cite{liang2018toward}.

For FL in wireless IoT networks, unreliable date is likely to be uploaded intentionally or unintentionally by mobile clients \cite{kang2019incentive, kang2020reliable}, resulting in a low learning quality.
So, how to design trustworthy and reliable federated client selection (FedCS) strategies is a challenge in wireless FL systems.
In addition, the model quality relies on frequent communications between the FL server and clients, which leads to another challenge in communications and computation.
Accordingly, reaping the potential benefits of wireless FL in practice requires meeting the aforementioned challenges in FedCS and diverse resource management.
Among the early iconic contributions in this area, the authors of \cite{chen2020convergence,chen2020joint} proposed a probabilistic FedCS approach to minimize the training loss while guaranteeing model fast convergence.
The authors of \cite{tran2019federated} used the Lagrangian dual framework to investigate the weighted sum of energy consumption and time cost (WSET)-based minimization problem from the perspective of joint transmit power and computation resources  allocation as well as local model accuracy optimization (LMAO).
The aforementioned  studies for FL in wireless IoT networks assume that all participating clients are willing to learn collaboratively and continuously.
In fact, there are various reasons such that mobile clients are reluctant to contribute any learning resources to participate in the training process, e.g., limited CPU capability, battery, and their own requirements/schedule.
Therefore, incentive mechanisms which incentivize the  trustworthy and cost-effective learning behavior become a  catalyst for making wireless FL a reality.

Nowadays, social attributes have become a vital factor for realizing trustworthy multi-resource sharing thanks to the success of Social Internet of Things (SIoT) \cite{nitti2016exploiting}, where social relationships can be built among mobile users via online social networks (e.g., Twitter, Facebook, and WeChat, etc.).
The knowledge of mutual social relationships among IoT devices can be beneficial to trustworthy and reliable FedCS, thereby mitigating  malicious attacks throughout the learning process.
Discussions on exploiting SIoT features in the FL paradigm have gradually attracted the attention of researchers in recent years.
Most research efforts to date have focused on clustering the FL framework by leveraging the social relationships among IoT devices for minimizing the loss function \cite{khan2021socially,yin2021privacy,cheng2021dynamic}.
More recently, the authors of \cite{lin2021friend} used social relationships among mobile clients to design trustworthy and efficient wireless federated edge learning.
These investigations, however, are based on the premise that all clients (i.e., any client covered by the FL server) are known to the FL server.
In this framework, the fundamental assumption is that all clients could fully and spontaneously participate in the complete training process, without taking into account the time-varying states and schedules of clients in practice.
Motivated by this, we develop socially-driven trustworthy and cost-effective leaner referral mechanism aided (LRef)-FedCS policies for FL in hierarchical (Hie)IoT networks.
To accentuate the role of social trust, we introduce a novel hierarchical dimension within the HieIoT network, founded on the establishment of a social relationship between the client and the FL server.
Consequently, we categorize all clients under the coverage of the FL server into two distinct types, namely Registered FL clients (RC) and Unregistered FL clients (UnRC), based on this relationship's presence or absence.

The main goal of this paper is to design efficient joint LRef-FedCS and LMAO schemes for the time-average WSET minimization and fairness guarantees in the HeiIoT FL network, where RCs will recommend trustworthy UnRCs as temporary learners in some training rounds.
Specifically, the long-term fairness metric adopted in this work is defined from the perspective of the RCs, which can effectively incentivize RCs to perform the learner referral process.
Drawing on the  Lyapunov optimization technique, we transform the complex time-coupled WSET minimization problem into a step-by-step online joint optimization problem (JOP).
Notably, a practical and challenging scenario is considered, where all RCs and UnRCs can move around within the area covered by the server and cannot continuously participate in the complete training process.
The novelty and contributions of this paper are summarized as follows:
\begin{itemize}
\item  We define two participation modes of RCs according to their current status, dubbed the direct participation mode (DParM) and the learner referral mode (LRefM), respectively.
We further classify the LRefM into the Partial- and the Full-LRefM based on the current requirements of the recommended UnRCs.
The main thrust of introducing the above definitions is devoted to facilitating the LRef-FedCS and realizing on-demand and rational scheduling of clients' surplus resources.

\item  To solve the step-by-step online JOP, we propose two algorithms by employing the centralized search, matching game, and self-adaptive global best harmony search (SGHS) algorithm.
We analyse the stability and convergence of the matching game based LRef-FedCS approach.
The complexity of the matching game based LRef-FedCS approach is upper bounded by $MN$, where $M$ and $N$ are the number of RCs and UnRCs, respectively.

\item We run extensive simulations to assess performance of the LRef-FedCS methods for FL in the HieIoT network.
Results show that the solution tightness behavior is sensitive to the balance parameter of Lyapunov, and the distributed method can perform closely to the optimal solution with only $2.33\%$ bias within $10$ iterations.
Moreover, the results of experiments on the MNIST/CIFAR-10 datasets show that the proposed LRef-FedCS scheme strikes a good balance between pursuing high-quality of FL training and reducing system cost.
\end{itemize}
The rest of this paper is organized as follows.
The related work is presented in Sec. II.
The preliminary of FL and system model are given in Sec. III and Sec. IV, respectively.
The problem formulation and the online algorithm design  are respectively presented in Sec. V and Sec. VI.
Sec. VII introduces an distributed LRef-FedCS method based on matching game theory, and Sec. VIII reports the results of simulations and experiments.
Finally, conclusions are presented in Sec. IX.

{\bf\em Notation}: The set of real numbers is denoted by $\mathbb R$.
Vectors and matrices are denoted by bold lowercase and uppercase letters, respectively.
Given an $M\times N$ matrix $\pmb\Lambda$, the the $m\text{-th}$ row is represented by $\pmb\Lambda_{m,:}[ ]\in{\mathbb R}^{1\times N}$.
Conversely,  $\pmb\Lambda_{-m,:}[ ]$ denotes the matrix $\pmb\Lambda$ with the $m\text{-th}$ row removed, encapsulating the remaining elements.
The horizontal concatenation of sets ${\cal M}$ and ${\cal N}$ is denoted by $[{\cal M}~{\cal N}]$.
The expectation of a vector $\boldsymbol x$ and the cardinality of a set ${\cal X}$ are denoted by ${\mathbb E}\{\boldsymbol x\}$ and $|\cal X|$, respectively.

\section{Related Work}


The framework of reliable client selection and scheduling round-by-round focuses mainly on the long-term benefits, and it has emerged as a prominent area of focus in the domain of FL research within wireless IoT networks.
An established theoretical tool for problems of this kind is provided by Lyapunov concept-based optimization theory \cite{neely2012dynamic,neely2013}.
Along this line, most works jointly select energy-efficient clients and schedule resources in different training rounds to achieve fairness or maximize utility \cite{xu2020client,huang2020efficiency,zhu2022online}.
Specifically,  the authors of \cite{xu2020client} proposed an algorithm that only utilizes currently available wireless channel information for joint client selection and bandwidth allocation to meet a preset long-term performance requirement.
They further studied the fairness-guaranteed client selection issue for FL in edge networks \cite{huang2020efficiency}.
In \cite{zhu2022online }, the training time minimization problem for the adaptive client selection design in asynchronous FL was investigated, where the client availability and long-term fairness were considered.
It worth mentioning that the Lyapunov concept-based algorithm is particularly cognizant of time/slot coupled constraints and objectives, i.e., long-term utilities and constraints.
Also incentive mechanisms have become indispensible for FL to sustain long-term joint client/learner selection and resource scheduling.
The authors of \cite{pang2022incentive} formulated a procurement auction to model and analyze the competitive cost minimization problem via selecting and scheduling clients in different training rounds.
For critical energy infrastructure systems, the work in \cite{lu2021toward} developed a multidimensional contract framework to incentivize clients to participate in asynchronous FL for learning accuracy maximization.
More recently, the authors of \cite{zhang2022enabling} used repeated games in the infinite time horizon to model and analyze  clients' interactions in the long-term cross-silo FL process.
Their proposed framework is able to reduce the number of free riders and increase the amount of local data for model training.

The integration of social network features with FL has the potential to promote  trustworthy and reliable FedCS in FL networks.
A few early studies presented relevant solutions.
In \cite{khan2020self}, a clustering algorithm based on social awareness-aided cluster head selection was provided to address the global training time minimization problem.
The proposed framework enables reliable self-organizing FL, where both social relationships and computational resources of the devices are take into account in selecting the cluster head.
A framework of discentralized FL on single-sided trust social networks was proposed in \cite{he2019central}.
Their simulation results and mathematical analysis on rigorous regret show the benefit of communicating with trusted users in the FL network.
Furthermore, a social effect based incentive mechanism for federated edge learning was designed to defend against malicious or inactive learners \cite{lin2021friend}.
In addition, this social federated edge learning framework establishes a trust model via indirect social incentives.
Different from the above studies, we propose joint LRef-FedCS and diverse resource scheduling schemes in a more realistic scenario,  where not all clients are registered at the FL platform (i.e., not all clients are known to the FL server).
Moreover,  we also leverage the social relationships among all clients (RCs and UnRCs) to facilitate the trustworthy LRef-FedCS process in a practical scenario, where all RCs and UnRCs can freely move around within the area covered by the server.
\begin{figure}
\begin{minipage}[t]{0.45\textwidth}
\centering
\includegraphics[width=1\textwidth]{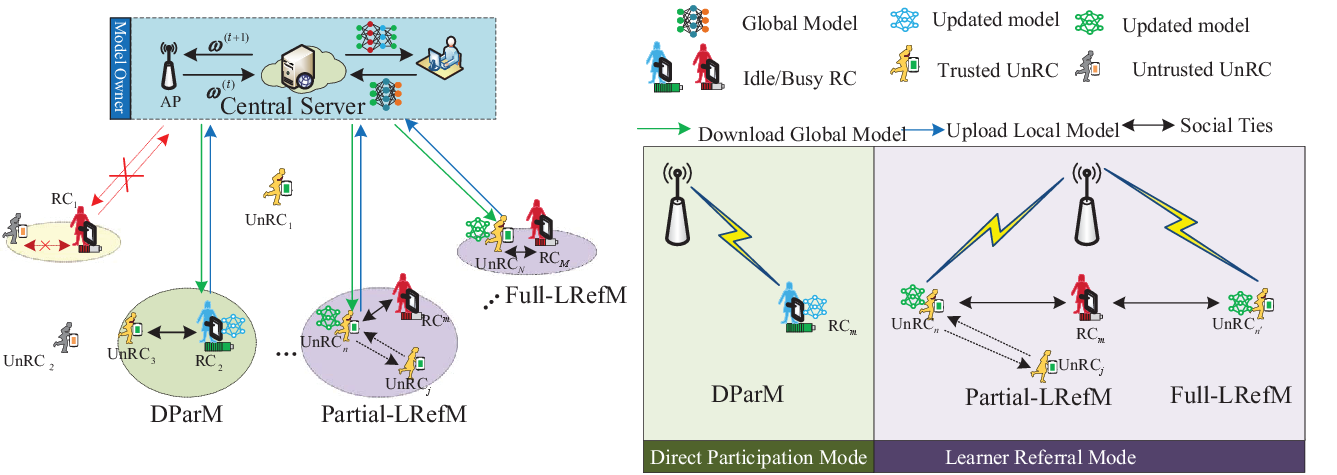}
\caption{System architecture and participation modes of RCs and UnRCs within the coverage of the FL server in the HieIoT network.  }
\label{fig:1}
\end{minipage}
\end{figure}
\section{Preliminary of Federated Learning}
The learning process of FL minimizes the loss function through frequent communication between FL server and learners to update model and gradient until the model converges \cite{tran2019federated}.
Each learner iteratively computes the local model and the gradient until a local accuracy $0\leq \theta\leq 1$ is achieved and upload them to the FL server.
Then, all collected local parameters and gradients are aggregated to generate a global model $\pmb\omega$.
When a specific global accuracy $0\leq \epsilon\leq 1$ is reached, the entire training process is terminated.
Upon achieving an global accuracy level $\epsilon,$ a number of training iterations are required, causing interaction between learners and FL server.
As mentioned in \cite{tran2019federated, ma2017distributed}, the upper bound of training rounds is closely related to the local accuracy $\theta$ and the global model accuracy $\epsilon$, which is specifically expressed as $T_g(\epsilon, \theta)={\cal O}(\log(1/\epsilon))/(1-\theta).$
For a fixed global accuracy $\epsilon$, so the upper bound of training rounds  can be normalized to $T_g=1/(1-\theta).$
Likewise, the upper bound of local iterations is normalized to $\log(1/\theta)$\footnote{Here, $\theta$ and $\epsilon$ are the gradient difference between two successive iterations of loss function on local and global models, respectively.}.

\section{System Model}
We consider an FL system in a HieIoT network consisting of a set ${\cal M}\triangleq \{v_1, \ldots, v_M\}$ of $M$ RCs (i.e., clients registered at the FL platform) and a set of ${\cal N}\triangleq \{v_1, \ldots, v_N\}$ of $N$ UnRCs.
The set of all RCs and UnRCs is denoted by ${\cal V}\triangleq \{v_1, v_2, \ldots, v_{|{\cal V}|}\}$ and $|{\cal V}|=M+N.$
The FL server first broadcast the information of a FL job to all RCs, but it does not have access to the information of UnRCs.
Moreover, the trust among clients can be established through SIoTs such as WeChat, Twitter, and Facebook, etc., so the trust remains unchanged for a long time.
In addition, without loss of generality, we assume that for each UnRC $v_n\in{\cal N}$, there exists at least one RC ($v_m\in{\cal M}$) having a social relationship with the UnRC.
This assumption is reasonable because if there is no trust value from any RC to UnRC $v_n\in{\cal N}$, then $v_n$ will not be recommended by RCs (i.e., it will not participate in any task).
In this context, we first remove this type of UnRCs when implementing LRef-FedCS.
Denote by ${\boldsymbol W}=[w_{m,n}]\in{\mathbb R}^{M\times N}$ the trust matrix with entries $w_{m,n}\in[0, 1]$ representing the trust value between RC $v_m$ and UnRC $v_n$, where $w_{m,n}=0$ means $v_m$ and $v_n$ are strangers and $w_{m,n}\in(0, 1]$ otherwise.
The key notation definitions are summarized in Table \ref{table:1}.
\begin{table}[!t]
\centering
\caption{List of Notations}
\begin{tabular}{|c|p{2.55cm}|c|p{2.55cm}|}
  \hline
  ${\cal M}$ & set of RCs & ${\cal N}$ & set of UnRCs\\
  \hline
  $\theta[t]$ & local accuracy   & ${\boldsymbol W}$ & matrix of trust values\\
  \hline
  ${\cal N}_m$& \multicolumn{3}{p{6.5cm}|}{extend set of RC $v_m$, ${\cal N}_m={\cal N}\cup\{v_m\}$}\\
  \hline
  ${\cal N}_m^{\text{tru}}[t]$& \multicolumn{3}{p{6.5cm}|}{set of UnRCs trusted by RC $v_m$}\\
  \hline
  \multirow{2}{*}{${\cal N}_{\text{act},m}^{\text{tru}}[t]$ }&\multicolumn{3}{p{6.5cm}|}{  set of UnRCs with personal QoS requirements trusted by RC $v_m$ in time slot $t$  }\\
  \hline
 \multirow{2}{*} {${\cal N}_{m}^{\text{nei}}[t]$} &\multicolumn{3}{p{6.5cm}|}{ set of UnRCs located in the sensing area of RC $v_m$ in time slot $t$}\\
  \hline
\multirow{2}{*}{${\cal N}_{\text{act},m}^{\text{nei}}[t]$}&\multicolumn{3}{p{6.5cm}|}{set of UnRCs with personal QoS requirements located in the sensing area of RC $v_m$ in time slot $t$ }\\
  \hline
  $\phi_m[t]$ & \multicolumn{3}{p{6.5cm}|}{PMDM indicator of RC $v_m$ in time slot $t$ }\\
  \hline
$\alpha_{m,i}[t]$ &\multicolumn{3}{p{6.5cm}|} {LRef-FedCS indicator of $v_i\in{\cal N}_m, v_m\in{\cal M}$}\\
  \hline
\multirow{2}{*}{${\pmb\phi}[t]$} &\multicolumn{3}{p{6.5cm}|}{ participation mode decision of all RCs in time slot $t$, \quad${\pmb\phi}[t]=\{\phi_m[t]\}_{v_m\in{\cal M}}$}\\
\hline
\multirow{2}{*}{${\pmb\alpha}[t]$} &\multicolumn{3}{p{6.5cm}|}{LRef-FedCS policies of all UnRCs in time slot $t$,  ${\pmb\alpha}[t]=\{\alpha_{m,i}[t], v_i\in{\cal N}_m, v_m\in{\cal M}\}$}\\
\hline
${\boldsymbol a}[t]$ &\multicolumn{3}{p{6.5cm}|}{action policy of RCs, ${\boldsymbol a}[t]=\{{\pmb\phi}[t], {\pmb\alpha}[t]\}$}\\
\hline
${\boldsymbol A}_m[t]$ &\multicolumn{3}{p{6.5cm}|}{set of feasible actions of an RC $v_m$ in time slot $t$ }\\
\hline
${\boldsymbol A}[t]$ &\multicolumn{3}{p{6.5cm}|}{set of all feasible actions of all RCs in time slot $t$}\\
\hline
{$\widehat{R}_{m,i}[t]$} &\multicolumn{3}{p{6.5cm}|}{achieved data rate of RC $v_m$ for DParM in time slot $t$}\\
\hline
\multirow{2}{*}{$\widetilde{R}_{m,i}[t]$} &\multicolumn{3}{p{6.5cm}|}{achieved data rate at UnRC $v_i\in{\cal N}_m$ for LRefM in time slot $t$}\\
\hline
\multirow{2}{*}{$\text{T}_{m,i}^{\text{com}}[t]$} & \multicolumn{3}{p{6.5cm}|}{transmission time at client $v_i\in{\cal N}_m, v_m\in{\cal M}$ in time slot $t$ }\\
\hline
\multirow{2}{*}{$\text{T}_{m,i}^{\text{cmp}}[t]$}&\multicolumn{3}{p{6.5cm}|}{computation time at client $v_i\in{\cal N}_m, v_m\in{\cal M}$ in time slot $t$}\\
\hline
\multirow{2}{*}{$\text{E}_{m,i}^{\text{cmp}}[t]$} & \multicolumn{3}{p{6.5cm}|}{energy cost in computation phase at client $v_i\in{\cal N}_m, v_m\in{\cal M}$ in time slot $t$ }\\
\hline
\multirow{2}{*}{$\text{E}_{m,i}^{\text{com}}[t]$} & \multicolumn{3}{p{6.5cm}|}{energy cost in transmission phase at client $v_i\in{\cal N}_m, v_m\in{\cal M}$ in time slot $t$ }\\
\hline
$\tau[t]$ &\multicolumn{3}{p{6.5cm}|}{renewal time point in training round $t$}\\
\hline
\multirow{2}{*}{${\cal MG}$} &\multicolumn{3}{p{6.5cm}|}{matching game between RCs and UnRCs, ${\cal MG}\equiv<{\cal M}, {\cal N}, \succ_{\cal M}, \succ_{\cal N}>$}\\
\hline
${\cal L}_{m}^{\text{rc}}[t]$ & \multicolumn{3}{p{6.5cm}|}{preference list of an RC $v_m\in{\cal M}$}\\
\hline
${\cal L}_{n}^{\text{unrc}}[t]$ & \multicolumn{3}{p{6.5cm}|}{preference list of an UnRC $v_n\in{\cal N}$}\\
\hline
\multirow{2}{*}{${\cal M}_n[t]$} &\multicolumn{3}{p{6.5cm}|}{set of RCs that can collect the information of UnRC $v_n$ in time slot $t$ }\\
\hline
\end{tabular}
\label{table:1}
\end{table}
\section{Problem Formulation}
\subsection{Participation Mode Decision Model }
Before introducing the LRef-FedCS model, we first define the participation mode decision model (PMDM) shown in Fig. \ref{fig:1}.
In the PMDM, each RC acts according to its current personal schedule (sufficient or insufficient computing resources), where an action here represents an RC's decision to directly execute a training task or recommend a trustworthy UnRC as a candidate, these two actions are termed  DParM and LRefM, respectively.
We assume that each RC acts rationally,  with actions predicated on its current state and the trust value with the candidate UnRCs.
A binary variable $\phi_m[t] $ is defined to represent the two participation modes for RC $v_m\in{\cal M}$ in time slot $t$ (i.e., corresponding to the $t\text{-th}$ training task announced by the FL server), shown as follows:
\begin{equation}\label{s:3}
\phi_m[t]=\begin{cases}
0, &\hspace{-1em}\text{~if~} v_m \text{~uses DParM on the training task~}t,\\
1, &\hspace{-1em}\text{~if~} v_m \text{~uses LRefM on the training task~}t.
\end{cases}
\end{equation}
The introduction of PMDM is conducive to describe the LRef-FedCS process.
We further classify the LRefM into two modes based on the current requirements of the recommended UnRC:
{1) Partial-LRefM:} the mode is for recommending UnRCs with their own quality of service (QoS) requirements.
That is, the recommended UnRCs need to reserve certain resources to guarantee their personal QoS requirements in some training rounds;
{2) Full-LRefM:} the mode is proposed for the scenario  where the recommended UnRCs are inactive, i.e., they do not have specific QoS requirements in some training rounds.

\subsection{ Communication Phase }
It is evident that the LRef-FedCS strategy is inextricably linked to the RCs' participation mode decision-making process.
In order to facilitate the learner referral process of RCs, we denote an auxiliary notation in the view of each RC $v_m\in{\cal M}$ by ${\cal N}_m\triangleq {\cal N}\cup\{v_m\}.$
Therefore, the achieved data rate of client $v_i\in{\cal N}_m$ recommended by RC $v_m \in{\cal M}$ in time slot $t$ can be written as
\begin{align}\label{l:1}
R_{m,i}[t]=\phi_m[t]\alpha_{m,i}[t]\widehat{R}_{m,i}[t]
+(1-\phi_m[t])\alpha_{m,i}[t]\widetilde{R}_{m,i}[t],
\end{align}
where we set $\alpha_{m,i}[t]=1$ if $v_i\in{\cal N}_m$ is successfully recommended by RC $v_m$, otherwise $\alpha_{m,i}[t]=0$.
Moreover, $\widehat{R}_{m,i}[t]$ and $\widetilde{R}_{m,i}[t]$ represent the achieved data rate at client $v_i\in{\cal N}_m$ associated with RC $v_m$ under the DParM and LRefM, respectively.
It is noted that executing the DParM means that the recommended client is RC $v_m$ itself.
That is, $\alpha_{m,m}=1$ iff $\phi_m[t]=1$.
For ease of exposition, we use subscript $i$ to express the achieved data rate for the DParM.
In order to upload and update trained parameters, we consider an OFDMA protocol to establish wireless communications for clients.
Thus, for the DParM,  the achieved data rate of $v_i\in{\cal N}_m$ associated with RC $v_m$ in time slot $t$ can be expressed as
\begin{equation}\label{l:2}
\begin{aligned}
\widehat{R}_{m,i}[t]=B\log\left(1+\frac{h_{i,0}[t]p_i^{\max}}{N_0B} \right),
\end{aligned}
\end{equation}
where $N_0$ and $p_i^{\max}$ are the spectral density of the white Gaussian noise and transmit power budget of $v_i$, respectively,  $h_{i,0}[t]$ denotes the channel gain between $v_i$ and the FL server, and $B$ represents the transmission bandwidth.

Next, we derive  the achieved data rate  for LRefM, according to the states of the recommended UnRCs.
Denote by  ${\cal N}_m^{\text{tru}}[t]$ the set of UnRCs trusted by RC $v_m\in{\cal M}$.
Specifically, ${\cal N}_m^{\text{tru}}[t]:=\{v_n:w_{m,n}>0, v_n\in{\cal N}\}.$
In time slot $t$, the set of UnRCs that have QoS requirements in ${\cal N}_m^{\text{tru}}[t]$ is further denoted by ${\cal N}_{\text{act},m}^{\text{tru}}[t]$.
${\cal N}_{\text{act},m}^{\text{tru}}[t]$ can be established by a certain neighbor discovery algorithm and matched randomly in D2D communications \cite{gao2019dynamic}, and the maximum distance of client-to-client (C2C) is $d.$\footnote{To exploit individual disparities of RCs and UnRCs in terms of the surplus of communication- and computation-resource,
we simply adapt the data rate of C2C link as the QoS metric for UnRCs. In addition, we set the distance of C2C link is $5\text{m}$ \cite{gao2021reflection}.}
As mentioned in \cite{lin2021friend}, there is sufficient motivation among friends to contribute to collaborative learning.
Thus, the trust value between RCs and UnRCs can reflect the intensity of resource sharing to a certain extent.
Consequently, given the above sets as well as the Partial- and Full-LRefM definitions, the trust-aided resource scheduler  $\Xi(\cdot)$ for RCs, which represents a mapping from set ${\boldsymbol W}=\{w_{m,n}\}$ to the scheduled resource, which is formulated as $\Xi: {\boldsymbol W}\mapsto {\mathbb R}$, can be mathematically expressed as follows
\begin{align}\label{b:1}
\Xi(w_{m,i})\hspace{-0.3em}=\hspace{-0.4em}\begin{cases}
1-w_{m,i}, &\hspace{-0.5em}\text{if~}v_i\in{\cal N}_{\text{act},m}^{\text{tru}}[t],\\
1, &\hspace{-0.5em}\text{if~}v_i\in{\cal N}_m^{\text{tru}}[t]\setminus {\cal N}_{\text{act},m}^{\text{tru}}[t].
\end{cases}
\end{align}
Given the trust-aided resource scheduler  $\Xi(\cdot)$ for RCs, the data rate of the recommended $v_i\in{\cal N}_m$ can be shown as
\begin{align}
\widetilde{R}_{m,i}[t]=&\Xi(w_{m,i})B
\log\left(1+\frac{h_{i,0}[t]\Pi(w_{m,i})p_i^{\max}}
{N_0\Xi(w_{m,i})B} \right),\label{l:3}
\end{align}
where $\Pi(\cdot)$ is the trust-aided resource scheduler for the participant client $v_i$ associated with RC $v_m$, which is defined as follows
\begin{align}\label{b:2}
\Pi(w_{m,i})=\begin{cases}
&w_{m,i}, \qquad\qquad\text{if~} v_i\in{\cal N}_{\text{act},m}^{\text{tru}}[t],\\
&\frac{w_{m,i}}{\sum_{v_i\in{\cal N}_m^{\text{tru}}[t]}w_{m,i}},\qquad\qquad\qquad\\
&\qquad ~~~~ \text{if~} v_i\in{\cal N}_m^{\text{tru}}[t]\setminus{\cal N}_{\text{act},m}^{\text{tru}}[t].
\end{cases}
\end{align}

Let the data size of the local model parameters in learner $v_i\in{\cal N}_m$ to be updated be $C_i$.
So the time of uploading the parameters  to the server is given by
\begin{equation}\label{l:4}
\text{T}_{m,i}^{\text{com}}[t]={C_i}/{R_{m,i}[t]}.
\end{equation}
Correspondingly, the energy consumption for uploading the training parameters of $v_i\in{\cal V}$ is
\begin{align}
\text{E}_{m,i}^{\text{com}}[t]=&\left\{\phi_m[t]\alpha_{m,i}[t]
+(1-\phi_m[t])\alpha_{m,i}[t]\right.\notag\\
&\left.\times\Pi(w_{m,i})\right\}p_i^{\max}\text{T}_{m,i}^{\text{com}}[t].\label{l:5}
\end{align}

\subsection{Local Computation Phase}
For each client $v_i\in{\cal N}_m$, the budget of the CPU cycle frequency (cycle/byte) is represented by $f_i$.
Denote by $\rho_i f_i^{\zeta}$ the computation power of $v_i\in{\cal N}_m$, where $\rho_i$ is a constant that depends on the average switched capacitance and the average activity factor, and $\zeta>2$ is a constant.
Similarly, upon introducing the resource scheduler $\Pi(\cdot)$ on UnRC side,  the computation time at  $v_i\in{\cal N}_m$ for one local iteration can be expressed as
\begin{align}\label{l:6}
\text{T}_{m,i}^{\text{cmp}}[t]=
\frac{Q_i[t]b_i}{\left\{\phi_m[t]\alpha_{m,i}[t]
+(1-\phi_m[t])\alpha_{m,i}\Pi(w_{m,i})\right\}f_i},
\end{align}
where $Q_i$ and $b_i$ represent the numbers of training samples and CPU cycles that are required to process a data sample, respectively.

Accordingly, the associated  energy consumption for computation at $v_i\in{\cal N}_m$ for one local iteration is given by
\begin{align}
\text{E}_{m,i}^{\text{cmp}}[t]=&\rho_iQ_i[t]b_i
\left\{\phi_m[t]\alpha_{m,i}[t]+(1-\phi_m[t])\right.\notag\\
&\left.\times\alpha_{m,i}[t]\Pi(w_{m,i})\right\}^{\zeta-1}f_i^{\zeta-1}.\label{l:7}
\end{align}
Using \eqref{l:4}-\eqref{l:7},  the time cost and energy consumption for each $v_i\in{\cal N}_m$ in one global epoch are shown to be
\begin{align}
\text{T}_{m,i}[t]&=\log(1/\theta[t])\text{T}_{m,i}^{\text{cmp}}[t]+\text{T}_{m,i}^{\text{com}}[t],\\
\text{E}_{m,i}[t]&=\log(1/\theta[t])\text{E}_{m,i}^{\text{cmp}}[t]+\text{E}_{m,i}^{\text{com}}[t].
\end{align}

Additionally, in the HieIoT network FL system, we assume that two UnRCs can be randomly matched to form a C2C link within a distance of $d$,  each serving as a transceiver for the other.
In time slot $t$, if  UnRC $v_i\in{\cal N}_{\text{act},m}^{\text{tru}}[t]\cap{\cal N}_m$ is recommended as a learner, the instantaneous data rate of the C2C pair link $(v_i,v_j)$ can be expressed as
\begin{align}
R_{i,j}^{(m)}[t]= \alpha_{m,i}[t]w_{m,i}B\log\left(1+
\frac{g_{i,j}[t](1-w_{m,i})p_i^{\max}}
{N_0w_{m,i}B}\right),\label{add:2}
\end{align}
where $g_{i,j}[t]$ represents the channel  gain of  link $(v_i,v_j)$ in time slot $t$.
The corresponding transmit power and bandwidth allocated for the $(v_i, v_j)$ link are $(1-w_{m,i})p_i^{\max}$ and $w_{m,i}B$, respectively. In principle, each UnRC can randomly match any other UnRC within its selection range.

\subsection{Problem Formulation}

{\bf\em Decision variables.}
As formulated in Section V.A, the participation mode decision-making and LRef-FedCS process as they align with the timeline of announced training tasks (or time slots),  can be regarded as a renewal system, which is shown in Fig. \ref{fig:2}.
The set of durations for successive training tasks is $\{\text{T}[1], \text{T}[2], \ldots\}$.
Define $\tau[1]=0$, and for each training task $t$ define $\tau[t]$ as the $t\text{th}$ {\em renewal time instance}, i.e.,  $\tau[t]:\triangleq \sum\nolimits_{l=1}^{t} \text{T}[l].$
Note that  the defined {\em renewal time instance} is the time point at which the FL server announces the task.
As indicated before, the status\footnote{Status of RCs is determined according to whether its computation resources are surplus.} and classification of clients are critical for  performing the PMDM and LRef-FedCS.
As mentioned in Section V.A,  we give a more precise definition of binary variable $\alpha_{m,i}[t]$ on the extended set ${\cal N}_m$ of each RC $v_m\in{\cal M}$, where $\alpha_{m,m}[t]=1$ iff $\phi_m[t]=1.$
Moreover, it is assumed that each RC can only recommend at most one UnRC and each UnRC can only be
associated with one RC in any time slot, i.e., $\sum_{i\in{\cal N}_m}\alpha_{m,i}[t]\leq 1, \forall v_m\in{\cal M}$ and $\sum_{m=1}^M\alpha_{m,i}[t]\leq 1, \forall v_i\in{{\cal N}_m}.$
Let ${\boldsymbol A}[t]=\{\boldsymbol a[t]\}$ be the set of RCs' actions at the {\em renewal time instance} $\tau[t]$ with entries $\boldsymbol a[t]$ composed of PMDM ${\pmb\phi[t]}$ and LRef-FedCS ${\pmb\alpha}[t]$, respectively.
That is, ${\boldsymbol a}[t]\triangleq \{{\pmb\phi}[t], {\pmb\alpha}[t]\},$ where ${\pmb\phi}[t]=\{\phi_m[t]\in\{0, 1\}, \forall v_m\in{\cal M}\}$ and ${\pmb\alpha}[t]=\{\alpha_{m,i}[t]\in\{0, 1\}, v_i\in{\cal N}_m, \forall v_m\in{\cal M}\}.$
Therefore, at each {\em renewal time instance}, the FL server makes a decision on $\boldsymbol a[t].$
\begin{figure}[!t]
	\centering
	\includegraphics[width=3.45in]{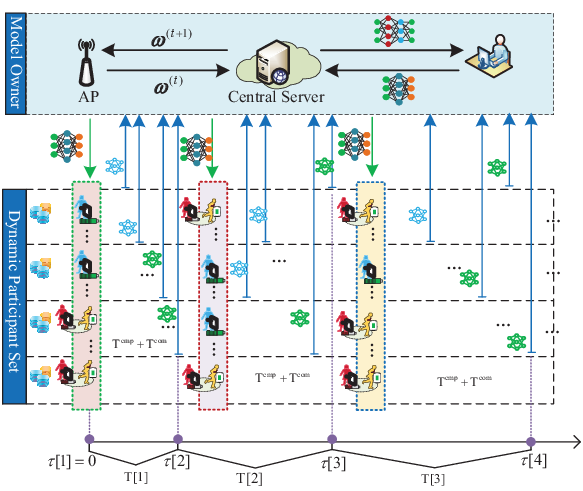}
	\caption{Timeline illustration of the renewal process for federated learning over the hierarchical IoT network.}
	\label{fig:2}
\end{figure}

{\bf\em Constraints and assumptions.} {\em (1) Long-term fairness requirement of FL training:}
The cost-effective action decision-making may not  be the best strategy in terms of the learning quality, and the requirement of FL training in action decision-making is another factor that needs to be taken into account.
Moreover,  the FL server can only broadcast the  training task to RCs.
In this context, how to eliminate the  ``{\em Tragedy of the Commons}'' \cite{milinski2002reputation} to sustain a long-term learning quality is another factor that should be considered.
Accordingly,  we introduce a time-average constraint (\ref{add:11}),  which ensures that the expected proportion of recommended learners for each RC $v_m\in{\cal M}$ does not lower than $\Delta\footnote{Typically, the value of $\Delta$ is less than or equal to $\frac{M}{M+N}$. In this paper, we set $\Delta=\frac{M}{M+N}$}.$
\begin{align}
\varlimsup_{R\rightarrow\infty} \frac{1}{R}
\sum\nolimits_{t=1}^{R}{\mathbb E}\left\{\sum\nolimits_{v_i\in{\cal N}_m} \alpha_{m,i}[t]\right\}\geq \Delta, \forall v_m\in{\cal M}.\label{add:11}
\end{align}

{\em (2) Duration of local iteration:}
To alleviate the straggler issue, for each potential client $v_i\in{\cal N}_m, v_m\in{\cal M}$, the computing time of one local iteration is upper bounded by $\text{T}_{\max}^{\text{cmp}}$, i.e.,
\begin{align}
&\frac{Q_i[t]b_i}{\sum\nolimits_{v_m\in{\cal M}}[\phi_m[t]\alpha_{m,i}[t]
+(1-\phi_m[t])\alpha_{m,i}[t]\Pi(w_{m,i})]f_i}\notag\\
&\qquad \qquad\qquad\qquad\qquad\leq \text{T}_{\max}^{\text{cmp}}, \forall v_i\in{\cal N}_m, v_m\in{\cal M}. \label{add:12}
\end{align}

{\em (3) Long-term QoS requirement of UnRCs:}
As stated in Section V.A, trust-aided resource scheduling can incentivize UnRCs to temporarily replace some trustworthy RCs to participate in the training process.
To model such QoS requirements, it is assumed that UnRCs' best interest is to pursue long-term requirements, which can be described as
\begin{align}
&\varlimsup\limits_{R\rightarrow \infty} \frac{\sum_{t=1}^{R}\sum_{m=1}^M\alpha_{m,n}[t]
{\cal I}_{{\cal N}_{\text{act},m}^{\text{tru}}[t]}(n)
{\mathbb E}\left\{ R_{n}^{\text{c2c}}[t]\right\}}
{\sum_{t=1}^{R}\sum_{m=1}^M\alpha_{m,n}[t]{\cal I}_{{\cal N}_{\text{act},m}^{\text{tru}}[t]}(n)}\geq R_{n,\min}^{\text{c2c}},\notag\\
&\quad \quad \quad \quad \forall v_n\in{\cal N}, \label{add:13}
\end{align}
where $R_{n,\min}^{\text{c2c}}$ is defined as the minimum possible data rate that $v_n\in{\cal N}$ can experience on its C2C link, $R_n^{\text{c2c}}[t]$ represents the rate of the C2C link that involves  client $v_n\in{\cal N}$, and ${\cal I}_{{\cal N}_{\text{act},m}^{\text{tru}}[t]}(\cdot)$ is an introduced indicator function on subset ${\cal N}_{\text{act},m}^{\text{tru}}[t]$,  mathematically expressed as:
\begin{equation}\label{add:15}
\begin{aligned}
{\cal I}_{{\cal N}_{\text{act},m}^{\text{tru}}[t]}(n)=
\begin{cases}
1, &\text{if~} v_n \in {\cal N}_{\text{act},m}^{\text{tru}}[t],\\
0, &\text{otherwise}.
\end{cases}
\end{aligned}
\end{equation}
Through defining ${\cal I}_{{\cal N}_{\text{act},m}^{\text{tru}}[t]}(n)$, the denominator of \eqref{add:13} can be used to calculate participation rounds of $v_n\in{\cal N}$ with the C2C links.

We employ the weighted sum method to address  the trade-off among multiple objectives using parameters $\lambda^{\text{t}}$ and $\lambda^{\text{e}}$ with $0\leq \lambda^{\text{t}}, \lambda^{\text{e}}\leq 1, \lambda^{\text{t}}+\lambda^{\text{e}}=1$.
The WSET at $v_i\in{\cal N}_m, \forall v_m\in{\cal M}$ for training task $t$ is given by
\begin{equation}\label{add:16}
\begin{aligned}
{\cal G}_{m,i}[t]=\frac{1}{1-\theta[t]}
\left(\lambda^{\text{t}}\text{T}_{m,i}[t]
+\lambda^{\text{e}}\text{E}_{m,i}[t]\right).
\end{aligned}
\end{equation}
It is worth noting that our model is sufficiently broad, since weights can be adjusted to emphasize a sub-objective over another, potentially honing the focus on minimizing solely energy consumption or time cost.
Given the above discussions, we introduce the following action policy making problem
\begin{align}
&\min_{\{\theta[t], {\boldsymbol a}[t]\in{\boldsymbol A[t]}\}_{t=1}^{R}}\varlimsup_{{R\rightarrow\infty}}\frac{1}{R}
\sum\nolimits_{t=1}^{R}\max_{v_i\in{\cal N}_m, v_m\in{\cal M}}{\cal G}_{m,i}[t]\label{add:17}\\
&\text{s.t.~}(\ref{add:11}), (\ref{add:12}), (\ref{add:13}),\tag{\ref{add:17}a}\\
&\hspace{1.5em}0\leq \theta[t]\leq 1, \tag{\ref{add:17}b}\\
&\hspace{1.5em}\phi_m[t]\in\{0, 1\}, \forall v_m\in{\cal M}, \tag{\ref{add:17}c}\\
&\hspace{1.5em}\alpha_{m,i}[t]\in\{0, 1\}, \forall v_i\in{\cal N}_m, v_m\in{\cal M},\tag{\ref{add:17}d}\\
&\hspace{1.5em}\sum\nolimits_{m=1}^{M}\alpha_{m,i}[t]\leq 1, \forall v_i\in{\cal N}_m,\tag{\ref{add:17}e}\\
&\hspace{1.5em}\sum\nolimits_{v_i\in{\cal N}_m} \alpha_{m,i}[t]\leq 1, \forall v_m\in{\cal M},\tag{\ref{add:17}f}
\end{align}
where ${\boldsymbol a}[t]$ captures the actions of participation mode decision and LRef-FedCS at {\em renewal time instance} $\tau[t]$, which is our optimization target.
Intuitively, our goal is to minimize the long-term WSET of the worst-case participant subject to the long-term requirements of FL training (\ref{add:11}) and UnRCs (\ref{add:13}), which tolerate short-term violation, along with the non-negotiable additional hard constraint, as denoted in (\ref{add:12})
It is noted taht, (\ref{add:13}) is an additional constraint indicating that UnRCs will act as temporary participants for some trustworthy RCs only when the long-term QoS exceeds a certain level.

{\bf\em Challenges:}
One would note that problem (\ref{add:17}) is a time-coupling scheduling problem without constraints (\ref{add:12}) and (\ref{add:17}b)-(\ref{add:17}f), regarding the long-term objective and requirements of FL training (\ref{add:11}) and UnRCs (\ref{add:13}).
Our primary challenges are mainly derived from the time-coupling constraints (\ref{add:11}) and (\ref{add:13}), which are quite difficult for an offline solution to deal with.
In addition, suppose that the participation mode decision and the LRef-FedCS process are made before the real training process.
Therefore, for an alternative sub-optimal solution, in the following section, we will elaborate on our transformation of the offline problem to an online scheduling problem via Lyapunov-concept optimization so as to satisfy the time-coupling requirements of FL training and UnRCs.

\section{Online Algorithm Design}
\subsection{Problem Transformation under Lyapunov Framework}
In this subsection, we first utilize the Lyapunov optimization framework to transform (\ref{add:17}) into an online problem.
First, we introduce a virtual queue $\gamma_m[t]$  for $v_m\in{\cal M}$ to deal with the long-term fairness constraint (\ref{add:11}).
Likewise, we denote by $z_n[t]$ the virtual queue for $v_n\in{\cal N}$, to deal with the long-term QoS requirements  (\ref{add:13}).
Specially, $\gamma_m$ and $z_n$ evolve in the FL process through complying with the following rules:
\begin{align}
\gamma_m[t+1]=&\left[ \gamma_m[t]+\Delta-\sum\nolimits_{v_i\in{\cal N}_m}\alpha_{m,i}[t]
\right]^{+}, \forall v_m\in{\cal M},\label{add:18}\\
z_n[t+1]=&\left[z_n[t]+\sum\nolimits_{v_m\in{\cal M}}\alpha_{m,n}[t]{\cal I}_{{\cal N}_{\text{act},m}^{\text{tru}}[t]]}(n)\right.\notag\\
&\left.\times\left(R_{n,\min}^{\text{c2c}}-R_n^{\text{c2c}}[t]\right)
\right]^{+},\forall v_n\in{\cal N},\label{add:18-1}
\end{align}
where $[x]^{+}$ is equivalent to $\max\{x, 0\}.$

For the sake of briefness, we eliminate all time coupling constraints and transform (\ref{add:17}) to the following problem
\begin{align}
\min_{\theta[t],{\boldsymbol a}[t]}&
\left\{\max_{v_i\in{\cal N}_m,v_m\in{\cal M} }V{\cal G}_i[t]
+\sum\nolimits_{v_m\in{\cal M}}\gamma_m[t]\right.\notag\\
&\left.\left(\Delta-\sum\nolimits_{v_i\in{\cal N}_m} \alpha_{m,i}[t]\right)+
\sum\nolimits_{v_n\in{\cal N}}z_n[t]\right.\notag\\
&\left.\left(\sum\nolimits_{v_m\in{\cal M}}\alpha_{m,n}[t]{\cal I}_{{\cal N}_{\text{act},m}^{\text{tru}}[t]}(n)
\left(R_{n,{\min}}^{\text{c2c}}-R_n^{\text{c2c}}[t]\right)\right)
\right\}  \label{add:19}\\
\text{s.t.~}& (\ref{add:12}) \text{~and~} (\ref{add:17}\text{b})-(\ref{add:17}\text{f}). \tag{\ref{add:19}a}
\end{align}
It can be seen that objective (\ref{add:19}) is non-convex and the constraints (\ref{add:17}c)-(\ref{add:17}d) imply that (\ref{add:19}) is an integer optimization problem.
Therefore,  problem (\ref{add:19}) is a mixed-integer non-convex program and there are no efficient methods to solve such kind of problems.
Employing the alternating optimization techniques \cite{csiszar1984information}, we separately and iteratively solve ${\boldsymbol a}[t]$ and local model accuracy $\theta[t]$.
In particular, we first solve for the optimal action decision $\boldsymbol a[t]$ given a fixed $\theta[t]$, and then derive the optimal local model accuracy when $\boldsymbol a[t]$ is fixed.
In time slot $t$, for a given local accuracy $\theta[t]$,  due to the unique LRec-FedCS constraints (\ref{add:17}d)-(\ref{add:17}e), the number of possible LRec-FedCS policies is reduced from $2^{M(K+1)}$ to $(K+1)^M$.
However, the number of participation modes is $2^M$.
Hence, the possible action space of problem (\ref{add:19}) is $(K+1)^M$.
Solving (\ref{add:19}) requires an exhaustive combinatorial search over all these possible action policies, where an NP-hard problem formulated by objective (\ref{add:19}) and constraint (\ref{add:17}b) must be solved for each of these actions.
This motivates the pursuit of computationally efficient, near-optimal solutions.
Before proposing a distributed method in the ensuing section, we first detail the procedure for obtaining the action space ${\boldsymbol A}[t]$, which is then used as a comparison method.

\subsection{Centralized Method for LRef-FedCS}
Upon introducing the notation of ${\cal N}_m$ for each $v_m\in{\cal M}$ and the extend definition of $\alpha_{m,i}[t], v_i\in{\cal N}_m$, PMDM is implicitly determined by the LRef-FedCS procedure, in the following,  so that our main concern focus on the latter, i.e., ${\pmb\alpha}[t].$
To visualize the interaction between the participation mode decision and the LRef-FedCS on the feasible solution space of problem (\ref{add:19}),  a generic function is defined on set ${\cal N}_m^{\text{tru}}[t]\cup\{m\}$ of $v_m$ to assign its computation time of one local iteration relative to the selected client, which is mathematically expressed as
\begin{equation}\label{a:1}
\text{T}_{m,i}^{\text{cmp}}[t]=\begin{cases}
\check{\text{T}}_{m,i}^{\text{cmp}}[t], &\text{if~}v_i=v_m,\\
\hat{\text{T}}_{m,i}^{\text{cmp}}[t], &\text{if~}v_i\in {\cal N}_{\text{act}, m}^{\text{tru}}[t],\\
\tilde{\text{T}}_{m,i}^{\text{cmp}}[t], &\text{if~}v_i\in{\cal N}_m^{\text{tru}}[t]\setminus{\cal N}_{\text{act}, m}^{\text{tru}}[t].
\end{cases}
\end{equation}
Given the above definitions, we design a centralized method to obtain the feasible action space of problem (\ref{add:19}) according to the structure of constraint (\ref{add:12}).
The detail process is shown as follows.

{\bf\em Step 1}. For each $v_m\in{\cal M}$, letting $\alpha_{m,i}[t]=1, v_i\in{\cal N}_m$ at {\em renewal time instance} $\tau[t]$ (i.e., corresponding to the $t\text{-th}$ announced training task) and using \eqref{a:1}, then constraint (\ref{add:12}) can be rewritten as
    \begin{align}
    \phi_m[t]\check{\text{T}}_{m,i}^{\text{cmp}}[t]&+(1-\phi_m[t])
    \{\xi_m[t]\hat{\text{T}}_{m,i}^{\text{cmp}}[t]\notag\\
    &+(1-\xi_m[t])\tilde{\text{T}}_{m,i}^{\text{cmp}}[t]\}
    \leq \text{T}_{\max}^{\text{cmp}}, \label{add:28}
    \end{align}
where $\xi_m[t]\in\{0, 1\}$ illustrates the participation mode decision of $v_m\in{\cal M}$, i.e., if $\xi_m[t]=1$, $v_m$ will select Partial-LRecM, otherwise, select Full-LRecM.

 {\bf\em Step 2}. At {\em renewal time instance} $\tau[t]$, the {\em Feasible Action Set Identification} of each $v_m\in{\cal M}$ will become a feasibility test for each client $v_i\in{\cal N}_m$.
In our scenario, the centralized method starts by the FL server, collecting global system information at each $\tau[t]$.
The global information of the entire system consists of the channel state information, social relationships among RCs and UnRCs, the training data size of each client, and the schedule of RCs, denoted by  ${\boldsymbol S}[t]=\{ {\boldsymbol H}[t], {\boldsymbol W}[t], {\boldsymbol Q}[t], {\boldsymbol G}[t]\}$.
Specifically, when considering $v_i\in{\cal N}_m, v_m\in{\cal M}$, its local data arrival process obeys a Possion distribution, i.e.,  ${\boldsymbol Q}[t]\triangleq\{Q_i[t]: Q_i[t]\sim {\cal P}(\lambda_i), v_i\in{\cal N}_m, v_m\in{\cal M}\}$.
The status information of RCs is described by ${\boldsymbol G}[t]\triangleq \{G_m[t]: G[t]\sim \text{Ber}(\varepsilon), v_m\in{\cal M}\}.$

{\bf\em Step 3.}  After collecting the global information of system ${\boldsymbol S}[t]$,  the set of RCs with sufficient computing resources will be obtained, represented by
${\cal M}_{\text{idle}}[t]$.
Upon introducing the PMDM, we further assume that constraint (\ref{add:28}) is always satisfied for $v_m\in{\cal M}_{\text{idle}}[t]$, i.e., $\phi_m[t]\check{\text{T}}_m^{\text{cmp}}[t]\leq \text{T}_{\max}^{\text{cmp}}$, which means  $\alpha_{m,m}[t]=1$ is the action of RC $v_m$, whilst it is not an element of the possible action set ${\boldsymbol A}_{m}[t]=\{\phi_m[t], \alpha_{m,i}[t]\}_{v_i\in{\cal N}_m}$ of $v_m\in{\cal M}\setminus{\cal M}_{\text{idle}}[t]$.
For each $v_m\in{\cal M}\setminus{\cal M}_{\text{idle}}[t]$, fix $\xi_m[t]=1$, we let $\alpha_{m,i}[t]=1$ for each $v_i\in{\cal N}_m$ if $\hat{\text{T}}_i^{\text{cmp}}[t]\leq \text{T}_{\max}^{\text{cmp}}$, otherwise, $\alpha_{m,i}[t]=0$.
Likewise, given $\xi_m[t]=0$, $\alpha_{m,i}[t]$ will set to be $1$ for each $v_i\in{\cal N}_m$ if $\tilde{\text{T}}_i^{\text{cmp}}[t]\leq \text{T}_{\max}^{\text{cmp}}$, or $\alpha_{m,i}[t]= 0$, otherwise.
Subsequently, at {\em renewal time instance} $\tau[t]$, each $v_m\in{\cal M}$ updates its possible action set ${\boldsymbol A}_{m}[t]$ and sends the result back  to the FL server, and then the action space ${\boldsymbol A}[t]$ is the combination of all $\{{\boldsymbol A}_{m}[t]\}_{v_m\in{\cal M}}$.

{\bf\em Step 4}.  In order to find the optimal action policy for (\ref{add:19}) satisfying constraints (\ref{add:17}e) and (\ref{add:17}f), the possible action policies deduced by the above steps can be expressed in matrix format.
Specifically,  let $\pmb\Lambda[t]=\{\blacktriangle_{mi}[t]\}\in{\mathbb R}^{M\times |{\cal V}|}$ describe the possible solution space with  $\pmb\Lambda_{m,:}[t]\in{\mathbb R}^{1\times |{\cal V}|}$ representing the possible action policies of $v_m\in{\cal M}$, where non-zero elements of $\pmb\Lambda_{m,:}[t]\in{\mathbb R}^{1\times |{\cal V}|}$ form the possible action set ${\boldsymbol A}_m[t]$.
Moreover, define the $M\times M $ matrix $\pmb\Lambda^1$ as the first block of $\pmb\Lambda$, and the second block of $\pmb\Lambda$ is denoted by $\pmb\Lambda^2\in{\mathbb R}^{M\times N},$ so yielding $\pmb\Lambda=[\pmb\Lambda^1 ~ \pmb\Lambda^2].$
For an exhaustive search over ${\boldsymbol A}[t]$, the action of $v_m\in{\cal M}_{\text{idle}}$ is determined  via filtering all non-zero elements of $\pmb\Lambda^1$, and then introduce an auxiliary variable $\pmb\Lambda^2(\pmb\Lambda_{-m,:}[ ])$ which represents the matrix after deleting some rows from ${\pmb\Lambda}^2$ corresponding to the elements in set ${\cal M}_{\text{idle}}$.
Given the auxiliary matrix $\pmb\Lambda^2(\pmb\Lambda_{-m,:}[ ])$, a possible action policy can be deduced from matrix entries that satisfy $\sum_{i=M+1}^{|{\cal V}|}\blacktriangle_{mi}[t]=1$ and $\sum_{v_m\in{\cal M}\setminus {\cal M}_{\text{idle}}}\blacktriangle_{mi}[t]=1$.
Given the sensing range limitation of each $v_m\in{\cal M}$, the number of combinations over the possible action set ${\boldsymbol A}[t]$ is upper bounded by $\Pi_{v_m\in{\cal M}\setminus{\cal M}_{\text{idle}}} \left|{\boldsymbol A}_m\right|$.

\subsection{Local Model Accuracy Optimization}
With the obtained optimal action policy ${\boldsymbol a}[t]$ for the announced training task $t$, the optimal local model accuracy $\theta[t]$ can be deduced as follows
\begin{align}
\min_{\theta[t]\in[0,1]}\max_{v_i\in{\cal N}_m,v_m\in{\cal M}}&V\left\{\frac{\log\left( \frac{1}{\theta[t]}\right)}{1-\theta[t]}A_{m,i}(\boldsymbol a[t])+\frac{B_{m,i}(\boldsymbol a[t])}{1-\theta[t]}\right\}\notag\\
&+C(\boldsymbol a[t]).\label{add:29}
\end{align}
In the following, we will omit symbol $t$, and $A_{m,i}[t], B_{m,i}[t]$, and $C(\boldsymbol a[t])$ for each $v_i\in{\cal N}_m, v_m\in{\cal M}$ are given as follows
\begin{align}
A_{m,i}(\boldsymbol a)=&\phi_m\alpha_{m,i}\check{{G}}_{m,i}^{\text{cmp}}
+(1-\phi_m)[\xi_m\alpha_{m,i}\hat{{G}}_{m,i}^{\text{cmp}}\notag\\
&+(1-\xi_m)\alpha_{m,i}
\tilde{{G}}_{m,i}^{\text{cmp}}],\label{a:3}\\
B_{m,i}(\boldsymbol a)=&\phi_m\alpha_{m,i}\check{{G}}_{m,i}^{\text{com}}
+(1-\phi_m)[\xi_m
\alpha_{m,i}\hat{{G}}_{m,i}^{\text{cmp}}\notag\\
&+[1-\xi_m\alpha_{m,i}
\tilde{{G}}_{m,i}^{\text{cmp}}],\label{a:4}\\
C(\boldsymbol a)=&\sum\nolimits_{v_m\in{\cal M}}\gamma_m\left[\Delta-\sum\nolimits_{v_i\in{\cal N}_m}\alpha_{m,i}\right]+\notag\\
&\sum\nolimits_{v_n\in{\cal N}}z_n[t]
\left[\sum\nolimits_{v_m\in{\cal M}}\alpha_{m,n}{\cal I}_{{\cal N}_{\text{act},m}^{\text{nei}}}(n)\right.\notag\\
&\left.\times(R_{n,\min}^{\text{c2c}}-R_n^{\text{c2c}})\right],\label{a:5}
\end{align}
where $\check{G}_{m,i}^{u}[t]=\lambda^{\text{t}}\check{\text{T}}_{m,i}^{u}[t]+\lambda^{\text{e}}\check{\text{E}}_{m,i}^{u}[t]$,
$\hat{G}_{m,i}^{u}[t]=\lambda^{\text{t}}\hat{\text{T}}_{m,i}^{u}[t]+\lambda^{\text{e}}\hat{\text{E}}_{m,i}^{u}[t]$, and $\tilde{G}_{m,i}^{u}[t]=\lambda^{\text{t}}\tilde{\text{T}}_{m,i}^{u}[t]+\lambda^{\text{e}}\tilde{\text{E}}_{m,i}^{u}[t]$ with $u\in\{\text{cmp}, \text{com}\}.$
Similar to the structure of (\ref{a:1}), for $\bigstar\in\{\text{T}, \text{E}\}$, we have
\begin{equation}\label{a:2}
\bigstar_{m,i}^{\text{com}}[t]=\begin{cases}
\check{\bigstar}_{m,i}^{\text{com}}[t], &\text{if~}v_i=v_m,\\
\hat{\bigstar}_{m,i}^{\text{com}}[t], &\text{if~}v_i\in {\cal N}_{\text{act}, m}^{\text{nei}}[t],\\
\tilde{\bigstar}_{m,i}^{\text{com}}[t], &\text{if~}v_i\in{\cal N}_{m}^{\text{tru}}[t]\setminus{\cal N}_{\text{act}, m}^{\text{tru}}[t].
\end{cases}
\end{equation}
It is noted that, for each $v_i\in{\cal N}_m, v_m\in{\cal M}$, $A_{m,i}(\boldsymbol a[t])$ and $B_{m,i}(\boldsymbol a[t])$ are the weighted sum of the time cost and energy consumption of the computation and communication phases, respectively.
$C(\boldsymbol a[t])$ accounts for the penalty due to the long-term constraints on the action decision ${\boldsymbol a}[t]$ in time slot $t$.
Intuitively, given the action decision ${\boldsymbol a}[t]$,  $A_{m,i}(\boldsymbol a[t])$, $B_{m,i}(\boldsymbol a[t])$, and $C(\boldsymbol a[t])$ become constants.

In each time slot $t$, for each combination of feasible action decisions of $v_m\in{\cal M}$, the optimization problem (\ref{add:29}) should be solved.
For ease of exposition, we set
\begin{align}
F(\theta[t])=&\max_{v_i\in{\cal N}_m,v_m\in{\cal M}}
\left\{\frac{V}{1-\theta[t]}\log\left(\frac{1}{\theta[t]}\right)A_{m,i}(\boldsymbol a[t])\right.\notag\\
&\left.+\frac{V}{1-\theta[t]}B_{m,i}(\boldsymbol a[t])\right\}+C(\boldsymbol a[t]).\label{add:31}
\end{align}
Thus, the optimal model accuracy $\theta[t]$ in time slot $t$ can be obtained by deducing from the following unconstraint problem transformed  from (\ref{add:29}):
\begin{align}
\min_{0\leq \theta[t]\leq 1} F(\theta[t]).\label{add:32}
\end{align}

Following the  SGHS algorithm \cite{pan2010self}, problem (\ref{add:32}) can  be solved by using the  this heuristic optimization approach described in a five-tuple $\text{SGHS}\equiv\langle\text{HMS},\text{HMCR},\text{PAR},\text{BW},\text{NI}\rangle$ consisting of the following components, namely Harmony Memory Size (\text{HMS}), Harmony Memory Consideration Rate (\text{HMCR}), Pitch Adjustment Rate (\text{PAR}), Distance BandWidth (\text{BW}), and the number of Improvisations (\text{NI}).
More precisely, the introduced parameters \text{HMCR}, \text{PAR}, and \text{BW} are adaptively adjusted as the iteration process.
Entries in the harmony memory (HM) are extracted with probability \text{HMCR}.
A large \text{HMCR} is conductive to local search, thereby improving the convergence of the SGHS algorithm, while a small \text{HMCR} increases the diversity of the HM.
In the pitch adjustment process, \text{PAR} indicates that a neighboring value fine-tuned from the value selected in the HM  will be extracted with probability $\text{PAR}\times\text{HMCR}$.
Upon introducing parameter \text{PAR}, the exploitation of the chosen element in the HM is enhanced with a large value of \text{PAR}, whilst the diversity of the HM is increased for a relative small value of \text{PAR}.
Note that the values of \text{HMCR} and \text{PAR} are difficult to determine, due to the contradiction  between local exploitation and global exploration in the SGHS algorithm.
In this paper, without losing generality, we let \text{HMCR} and \text{PAR}  obey normal distributions based on the empirically recommended distribution \cite{geem2001new}, i.e.,
$\text{HMCR}\sim{\cal N}(\mu_{\text{HMCR}}, \sigma_{\text{HMCR}}^2)$ and $\text{PAR}\sim{\cal N}(\mu_{\text{PAR}}, \sigma_{\text{PAR}}^2)$,  respectively.
Moreover, the empirically adjust principle of the \texttt{BW} in iteration $r$ can be inferred from \cite{omran2008global}
\begin{align}\label{ss:34}
\text{BW}(r)=\begin{cases}
\text{BW}_{\max}-\frac{\text{BW}_{\max}-\text{BW}_{\min}}
{\text{NI}} 2r, &\text{if~}r<\frac{\text{NI}}{2},\\
\text{BW}_{\min}, &\text{if~}r\geq \frac{\text{NI}}{2},
\end{cases}
\end{align}
where $\text{BW}_{\min}$ and $\text{BW}_{\max}$ denote the preset minimum and maximum search scopes of the SGHS algorithm.
Let $L$ be the update period of \text{HMCR} and \text{PAR} according to the recording parameters.
For more detailed information concerning the harmony search (HS), the reader is referred to an excellent contribution by Geem and Lee  \cite{geem2001new, lee2005harmony, omran2008global}.
The implementation of the algorithm is presented in Algorithm \ref{algorithm:1}.
\begin{algorithm}[!t]
\caption{Computation  Procedure of the  SGHS Algorithm for (\ref{add:32}) for All Possible Action Policies}\label{algorithm:1}
\begin{algorithmic}[1]
\newcommand{\algorithmicInit}{\textbf{Initialization:}}
\newcommand{\algorithmicIter}{\textbf{Iteration:}}
\newcommand{\algorithmicOutput}{\textbf{Output:}}
\newcommand{\algorithmicBreak}{\textbf{break.}}
\REQUIRE\;

\algorithmicInit\;

{Initialize parameters $\text{HMS}$, $\text{NI}$, $L$, $\text{BW}_{\min}$, $\text{BW}_{\max},$ $\mu_{\text{HMCR}}$, $\sigma_{\text{HMCR}}^2$, $\mu_{\text{PAR}},$ and $\sigma_{\text{PAR}}^2.$}
{Set the HM and assess it using $F(\theta[t]).$}
{Set $r=1, t_L-1,$ and $l_1, l_2, l_3 \in (0,1).$}

\algorithmicIter
\WHILE{$r\leq \text{NI}$}
\STATE Generate \text{HMCR} and \text{PAR} using the mentioned norm distributions and compute $\text{BW}(r)$ according to (\ref{ss:34}).
\IF{$l_1<\text{HMCR}$ }
\STATE{$\theta^{+}[r]=\theta_h[r]\pm l_3\times \text{BW},$  where $h\in \{1, 2, \ldots, \text{HMS}\}.$}
\IF{$l_2<\text{PAR}$}
\STATE{$\theta^{+}[r]=\theta^{\dag}[r],$ where $\theta^{\dag}[r]$ is the best element in the HM.}
\ENDIF
\ELSE
\STATE{$\theta^{+}[r]=\theta_{\min}[r]+l_3\times(\theta_{\max}[r]-\theta_{\min}[r])$}
\ENDIF
\IF{$F(\theta^{+}[r])<F(\theta^{\ddag}[r])$ }
\STATE{Substitute $\theta^{\ddag}[r]$ in HM to $\theta^{+}[r]$, where $\theta^{\ddag}[r] $ is the worst element in the HM.}
\ENDIF
\IF{$t_L=L$ }
\STATE{Update the $\mu_{\text{HMCR}}$ and $\mu_{\text{PAR}}$ by averaging the recorded values \text{HMCR} and \text{PAR}.}
\STATE{Reset $t_L=1$}
\ELSE
\STATE{$t_L=t_L+1$}
\ENDIF
\STATE{$r=r+1$}
\ENDWHILE

\algorithmicOutput {~The optimal local model accuracy $\theta^*[r]$ in the HM as evaluated by $F(\theta[r]).$}
\end{algorithmic}
\end{algorithm}

The optimal solution to (\ref{add:19}) can be obtained by iteratively performing {\bf\em Steps 1-4} and Algorithm  \ref{algorithm:1}.
However, the above optimal solution with an exponential computational complexity is prohibitive in practice settings with a large set of workers.
This motivates the pursuit of computationally efficient near-optimal solutions.
Thus, in the following, we further develop a low-complexity method to obtain a suboptimal solution to problem (\ref{add:19}) through exploiting the local information of each RC $v_m\in{\cal M}$.
Drawing on the matching game theory \cite{gu2015matching}, we employ a matching game based distributed algorithm to solve problem (\ref{add:19}), because of its ability to tackle mixed integer optimization problems, which will be presented in the ensuing section.

\section{Distributed LRef-FedCS Method}
In this section, we present the formulation of the LRef-FedCS process as a matching game, where URs minimize their individual costs in a distributed manner.
We analyze the existence of a stable solution and demonstrate that the complexity of the analysis algorithm is polynomial.
\subsection{Matching Game Based Distributed Method Design}
In the previous section, the FL server coordinates the optimal LRef-FedCS in a centralized manner.
However, we cannot design a decentralized and scalable algorithm using {\bf\em Steps 1-4}, which use an objective function that requires global information of all clients (URs and UnRCs).
Due to the distributed nature of mobile IoT devices, each RC is only aware of the UnRCs within its individual sensing area without additional signaling and overhead.
In the following, we denote the set of UnRCs located in the sensing area of $v_m$ by ${\cal N}_m^{\text{nei}}[t]$,  namely  ${\cal N}_m^{\text{nei}}[t]:=\{v_n\in{\cal N}: \text{dis}(v_m, v_n)\leq s_{\max}~\&~w_{m,n}>0\}$, where $s_{\max}$ is the sensing distance of RCs.
Therefore, at the {\em renewal time instance} $\tau[t]$, we define $\tilde{\boldsymbol A}_m[t]=\{\tilde{\boldsymbol a}_m[t]\}$ as the action set of $v_m$, where $\tilde{\boldsymbol a}_m[t]=\{\phi_m[t],  {\tilde{\pmb\alpha}}_m[t]\}$ is the action policy based on the local information, and $\tilde{\pmb\alpha}_{m}[t]=\{\alpha_{m,i}[t], i\in{\cal N}_m^{\text{nei}}[t]\cup\{v_m\}\}.$
Based on the notation of $\tilde{\boldsymbol a}_m[t]$, we further define the individual utility $U_m(\tilde{\boldsymbol a}_m[t], \theta[t])$ of each RC $v_m$ as follows
\begin{equation}\label{add:34}
\begin{aligned}
U_m(\tilde{\boldsymbol a}_{m}[t], \theta[t])=&-\left\{
\frac{V}{(1-\theta[t])}\log\left(\frac{1}{\theta[t]}\right)A_{m,i}(\tilde{\boldsymbol a}_{m}[t])\right.\\
&\left.+\frac{V}{(1-\theta[t])}B_{m,i}(\tilde{\boldsymbol a}_{m}[t])
+C_m(\tilde{\boldsymbol a}_m[t])
\right\},
\end{aligned}
\end{equation}
where $A_{m,i}(\tilde{\boldsymbol a}_m[t])$ and $B_{m,i}(\tilde{\boldsymbol a}_m[t])$ can be deduced from (\ref{a:3}) and (\ref{a:4}) by substituting $\tilde{\boldsymbol a}_m[t]$ for ${\boldsymbol a}[t]$, respectively.
Further, $C_m(\tilde{\boldsymbol a}_m[t])$ quantifies the penalty component for $v_m$ based on the virtual queues $\tilde{\gamma}_m$ and $\tilde{z}_n$, derived from (\ref{add:18}) and (\ref{add:18-1}).
These computations employ ${\cal N}_m^{\text{nei}}[t]$ and ${\cal N}_{\text{act},m}^{\text{nei}}[t]$, enabling $C_m(\tilde{\boldsymbol a}_m[t])$ to be formally expressed as
\begin{align}
C_m(\tilde{\boldsymbol a}_m[t])=&\tilde\gamma_m[t]\left[\Delta-\sum_{v_i\in{\cal N}_m^{\text{nei}}[t]}\hspace{-0.2em}\alpha_{m,i}[t]\right]
+\sum_{v_n\in{{\cal N}_{\text{act},m}^{\text{nei}}}[t]}\hspace{-0.2em}\tilde z_n[t]\notag\\
&\times\underbrace{\left[\alpha_{m,n}[t]{\cal I}_{{\cal N}_{\text{act},m}^{\text{nei}}[t]}(n)\left(R_{n,\min}^{\text{c2c}}[t]-R_n^{\text{c2c}}[t]\right)\right]}_{[\eqref{add:35}-1]}.\label{add:35}
\end{align}
Given the above individual utility of each RC $v_m$, we can formulate the following problem, as derived directly from (\ref{add:19}) by substituting $\tilde{\boldsymbol a}_m[t]$ and ${\cal N}_m^{\text{nei}}[t]$ for ${\boldsymbol a}[t]$ and ${\cal N}_m^{\text{tru}}[t]$, respectively
\begin{align}
&\max_{\tilde{\boldsymbol a}_m[t]\in\tilde{\boldsymbol A}_m[t]}~U_m(\tilde{\boldsymbol a}_m[t], \theta[t])\label{a:6}\\
&\hspace{1.5em}\text{s.t.~}(\ref{add:12}) \text{~and~} (\ref{add:17}\text{b})-(\ref{add:17}\text{f}).\tag{\ref{a:6}a}
\end{align}

In problem (\ref{a:6}), the objective is reduced to maximize  the utility of each RC by LRef-FedCS in parallel.
A nature framework for studying such a combinatorial problem is offered by the matching theory which can achieve a distributed solution.
Following the two-sided matching game \cite{kazmi2017mode}, we formulate the LRef-FedCS as a matching game between the RCs and UnRCs, and then present a matching algorithm that can find a stable matching which is a key concept for a matching game.
As can be seen from constraints (\ref{add:17}e) and (\ref{add:17}f), we assume that each RC forms a set that can recommend a single UnRC.
However, for the recommended UnRC, the computation time required  by the UnRC should be below the tolerable predefined upper bound, i.e., constraint (\ref{add:12}).
Similarly, each UnRC also forms a set to accommodate an RC among all the RCs.
Therefore, our design amounts to a one-to-one matching given in the tuple ${\cal MG}\equiv\langle{\cal M}, {\cal N}, \succ_{\cal M}, \succ_{\cal N}\rangle$, where $\succ_{\cal M}:\triangleq\{\succ_{v_m}\}_{v_m\in{\cal M}}$ and $\succ_{\cal N}:\triangleq \{\succ_{v_n}\}_{v_n\in{\cal N}}$ represent the sets of preference relation of the RCs and UnRCs, respectively.
Formally, we define the relation ``prefer'' for $v_m$ between $v_n$ and $v_{n'}$ in the following definition.
\begin{definition}\label{def:1}
 At the {\em renewal time point} $\tau[t]$, $v_m$ prefers $v_n$ to $v_{n'}$, if $U_m(\phi_m[t], \alpha_{m,n}[t], \theta[t])>U_m(\phi_m[t], \alpha_{m, k'}[t], \theta[t])$, denoted by $v_n\succ_{v_m}^{[t]} v_{n'}$,  $\forall v_n, v_n' \in {\cal N}_m^{\text{nei}}[t], v_n\neq v_n'$.
And $v_n$ prefers $v_m$ to $v_{m'}$, if $U_m(\phi_m[t],  \alpha_{m,n}[t], \theta[t])>U_{m'}(\phi_{m'}[t], \alpha_{m',n}[t], \theta[t])$, denoted by $v_m\succ_{v_n}^{[t]}v_{m'}$, $\forall v_m, v_m'\in{\cal M}_n[t]=\{v_m:~v_n\in{\cal N}_m^{\text{nei}}[t], \forall v_m\in{\cal M}\}, v_m\neq v_m'$.
${\cal M}_n[t]$ represents the set of RCs that can collect information regarding UnRC $v_n$ in the training round $t$.
\end{definition}

Note that the set of preference relations of UnRCs over $v_m\in{\cal M}$ is also based on the preference function $U_{m}(\tilde{\boldsymbol a}_m[t], \theta[t])$.
This is because that the long-term QoS requirements of the UnRCs are embedded into the utility function.
Thus, obtaining the preference value of $v_n\in{\cal N}$ over $v_m\in{\cal M}$ also relies on the utility $U_m(\tilde{\boldsymbol a}_m[t], \theta[t])$, mainly due to the part of [\eqref{add:35}-1].
Moreover, we define the preference list of each $v_m\in{\cal M}$ (respectively, $v_n\in{\cal N}$) over $v_n\in{\cal N}$ (respectively, $v_m\in{\cal M}$) by ${\cal L}_m^{\text{rc}}$ (respectively, ${\cal L}_n^{\text{unrc}}$), which is ranked by the preference value in  non-increasing order.
However, as shown in Definition \ref{def:1}, the two sets of players in the matching game are the subsets of ${\cal M}$ and ${\cal N}$, respectively.
In order to be consistent with the matching game ${\cal MG}$, the preference list ${\cal L}_m^{\text{rc}}$ of each $v_m\in{\cal M}$  consists of two parts.
The first part is ranked by the preference value in non-increasing order over $v_n, v_n\in{\cal N}_m^{\text{nei}}[t]$, whilst the second part is just set $v_n, v_n\in{\cal N}\setminus{\cal N}_m^{\text{nei}}[t]$.
The preference value corresponding to the elements in the set is set to be $-\infty$.
Similarly, the preference list ${\cal L}_n^{\text{unrc}}$ of each $v_n\in{\cal N}$ comes from two subsets ${\cal M}_n[t]$ and ${\cal M}\setminus {\cal M}_n[t]$, where the preference value of $v_n\in{\cal N}$ over $v_m, v_m\in{\cal M}\setminus {\cal M}_n[t]$ is set to be $-\infty$, and the preference list over ${\cal M}_n[t]$ is ranked in non-increasing order.
Accordingly, a matching game based  decentralized method is developed, since only local information is used for each $v_m\in{\cal M}$ and $v_n\in{\cal N}$.
In addition, the matching game based decentralized method has a stable strategy that is a key concept in matching theory \cite{roth2008deferred} and can be defined as follows.
\begin{definition}
A matching $\cal MG$ is stable for each time slot $t$ if there is no blocking pair $(v_m,v_n)$, where $v_m\in{\cal M}, v_n\in{\cal N}$, such that $v_n\succ_{v_m}^{[t]}{\cal MG}(v_m)$ and $v_m\succ_{v_n}^{[t]}{\cal MG}(v_n)$, where ${\cal MG}(v_m)$ and ${\cal MG}(v_n)$ represent the current partners of $v_m$ and $v_n$, respectively.
\end{definition}
\begin{algorithm}[!t]
\caption{Distributed  Method for Solving Problem (\ref{add:19}) on Each Training Task}\label{algorithm:3}
\begin{algorithmic}[1]
\newcommand{\algorithmicInit}{\textbf{Initialization:}}
\newcommand{\algorithmicIter}{\textbf{Iteration:}}
\newcommand{\algorithmicOutput}{\textbf{Output:}}
\newcommand{\algorithmicBreak}{\textbf{break.}}
\REQUIRE\;
\algorithmicInit\;

{Initialize the local information $\tilde{\boldsymbol S}_m[t]=\{{\boldsymbol H}_m[t], {\boldsymbol W}_m[t], {\boldsymbol Q}_m[t], {\boldsymbol G}_m[t]\}$ of each $v_m\in{\cal M}$.}

\% \underline{\em{Stage I: Matching Process:}}

\STATE{{\bf Input:} RCs' preference list ${\cal L}^{\text{rc}}[t]$ and UnRCs' preference list ${\cal L}^{\text{unrc}}[t]$ in time slot $t$.}
\STATE{{\bf Output:} System-optimal stable matching ${\cal MG}$.}
\STATE{{\bf Metodo:}}
\STATE{Set up RCs' preference lists as ${\cal L}_m^{\text{rc}}$, $\forall v_m\in{\cal M}$;}
\STATE{Set up UnRCs' preference lists as ${\cal L}_k^{\text{unrc}}$, $\forall v_n\in{\cal N}$;}
\STATE{Set up a list unmatched RCs ${\cal UM}=\{v_m, \forall v_m\in{\cal M} \};$  }
\WHILE{${\cal UM}$ is not empty}
\STATE{$v_m^{\text{rc}}$  proposes to the UnRC that locates first in its list, $\forall v_m \in{\cal UM}$;}
\IF{$v_n\in{\cal N}$ receives a proposal from $v_{m'},$ and $v_{m'}$ is more preferred than the current hold $v_m$}
\STATE{$v_n$ holds $v_{m'}$ and rejects $v_m$;}
\STATE{$v_{m'}$ is removed from ${\cal UM}$ and $v_m$ is added into ${\cal UM};$ }
\ELSE
\STATE{UnRC rejects $v_{m'}$ and continues holding $v_m$;}
\ENDIF
\ENDWHILE
\STATE{Output optimal matching ${\cal MG}.$}

\% \underline{\em{Stage II: Local Model Accuracy Optimization:}}
\STATE{Run Alg. \ref{algorithm:1} for problem (\ref{add:32}) to obtain $\theta[t]$.}
\end{algorithmic}
\end{algorithm}

\begin{theorem}
Given a fixed  balance parameter $V$, in each time slot $t$, the stable solution $({\cal MG}^{*}[t], \theta^{*}[t])$ produced by Algorithm \ref{algorithm:3} is a local solution to the optimization problem  (\ref{a:6}).
\end{theorem}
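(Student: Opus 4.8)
The plan is to follow the two-stage structure of Algorithm~\ref{algorithm:3}: first show that Stage~I terminates with a stable matching ${\cal MG}^{*}[t]$, and then establish that this matching, paired with the $\theta^{*}[t]$ returned by Stage~II, is a local solution of (\ref{a:6}) in the sense of being immune to feasible local (pairwise) deviations. For Stage~I I would recognize the proposal--rejection loop as a deferred-acceptance procedure of Gale--Shapley type with the RCs as proposers. Termination is immediate: every $v_m$ proposes down its finite list ${\cal L}_m^{\text{rc}}$ in non-increasing order of preference and can be rejected at most $|{\cal N}_m^{\text{nei}}[t]|$ times before only the $-\infty$-valued tail remains, so the set ${\cal UM}$ empties after finitely many iterations. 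Stability is then the standard contradiction: if a blocking pair $(v_m,v_n)$ existed, then $v_n\succ_{v_m}^{[t]}{\cal MG}^{*}(v_m)$ would force $v_m$ to have proposed to $v_n$ at some iteration, while a UnRC only ever replaces a held RC by a strictly more preferred one, so $v_n$ could never have released $v_m$ in favour of the strictly less preferred ${\cal MG}^{*}(v_n)$ demanded by $v_m\succ_{v_n}^{[t]}{\cal MG}^{*}(v_n)$, a contradiction.

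The core of the argument is to translate this set-theoretic stability into the optimization notion of a local solution of (\ref{a:6}). I would define the admissible local moves at the matched profile as feasible single rematches: an RC $v_m$ re-points $\tilde{\pmb\alpha}_m[t]$ to some $v_n\in{\cal N}_m^{\text{nei}}[t]$ (or to itself), keeping the one-to-one constraints (\ref{add:17}e)--(\ref{add:17}f) intact. Because matching is bilateral, such a move is feasible only if both endpoints consent, and by Definition~\ref{def:1} both sides rank partners through the utilities $U_m(\tilde{\boldsymbol a}_m[t],\theta[t])$: the RC side compares $v_m$'s own utility across candidate partners, while the UnRC side compares competing RCs' utilities for the same UnRC. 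Hence a jointly beneficial rematch of $v_m$ onto $v_n$ is exactly a configuration in which $v_n\succ_{v_m}^{[t]}{\cal MG}^{*}(v_m)$ and $v_m\succ_{v_n}^{[t]}{\cal MG}^{*}(v_n)$, i.e.\ a blocking pair. Since ${\cal MG}^{*}[t]$ is stable, no such pair exists, so no feasible local move strictly increases any RC's objective in (\ref{a:6}); the matched action profile is therefore locally optimal.

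Finally I would dispatch the coupled accuracy variable. With ${\cal MG}^{*}[t]$ fixed, the coefficients $A_{m,i}$, $B_{m,i}$, and $C_m$ appearing in $U_m$ become constants, so Stage~II reduces to the one-dimensional program (\ref{add:32}), which Algorithm~\ref{algorithm:1} solves to return a (locally) optimal $\theta^{*}[t]$ for that fixed action. Combining the two coordinate-wise optimality conditions---no strictly improving single rematch and no strictly improving perturbation of $\theta$---yields that $({\cal MG}^{*}[t],\theta^{*}[t])$ is a stationary point of the alternating optimization, which is precisely the claimed local solution of (\ref{a:6}).

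I expect the main obstacle to lie in the preference-alignment step of the second paragraph. The UnRC-side ranking in Definition~\ref{def:1} is expressed through the \emph{RC} utilities $U_m$ versus $U_{m'}$ rather than through a separate UnRC objective, so I must check carefully that an RC-improving rematch genuinely induces a \emph{strict} UnRC-side preference (and hence a genuine blocking pair), not mere indifference, and that the penalty term $C_m(\tilde{\boldsymbol a}_m[t])$---which embeds the long-term fairness queue $\tilde\gamma_m$ and the QoS queue $\tilde z_n$ through (\ref{add:18}) and (\ref{add:18-1})---does not misalign the two sides' rankings and thereby admit a beneficial deviation that the stability argument fails to forbid.
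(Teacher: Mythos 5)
Your proposal is correct, but it takes a genuinely different route from the paper's Appendix~A. You prove the theorem via stability: Stage~I is deferred acceptance, hence terminates in a stable matching; any jointly improving single rematch would constitute a blocking pair in the sense of Definition~2, so none exists; Stage~II then gives coordinate-wise optimality in $\theta$ for the fixed matching, and the two no-improving-deviation certificates together make $({\cal MG}^{*}[t],\theta^{*}[t])$ a local solution of (\ref{a:6}). The paper never invokes blocking pairs at all. Instead it maps each iteration $r$ to a point $(\tilde{\boldsymbol a}_r[t],\theta_r[t])$ and argues pure monotonicity of the alternating process: for fixed $\theta_{r-1}[t]$, the accept/reject rounds of the matching guarantee $\min_{v_m\in{\cal M}}\{-U_m(\tilde{\boldsymbol a}_r[t],\theta_{r-1}[t])\}\leq \min_{v_m\in{\cal M}}\{-U_m(\tilde{\boldsymbol a}_{r-1}[t],\theta_{r-1}[t])\}$, and for the fixed matching, SGHS with large NI yields $F(\theta_r[t])\leq F(\theta_{r-1}[t])$; convergence to a local solution is then asserted from this descent property. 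What each approach buys: yours actually substantiates the word ``stable'' in the theorem statement---it converts the no-blocking-pair property into a local-optimality certificate at the limit point, which is the deviation analysis the paper leaves implicit (monotone non-increase of an objective along iterations by itself only gives convergence, not that the limit resists local perturbations). The alignment worry you flag is real but benign here: both sides of the market rank partners through the same utilities $U_m$ (the UnRC side compares $U_m$ against $U_{m'}$), so a jointly preferred rematch is by construction a blocking pair, with ties broken arbitrarily. The paper's argument, in turn, is shorter and explicitly tracks the interleaving of the two stages across iterations $r$, which your coordinate-wise statement treats only at the fixed point; a fully rigorous write-up would want both ingredients.
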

\begin{proof}
Please see Appendix A.
\end{proof}

\subsection{Algorithm Summary and Discussions}
In our method, a stable solution ensures that no matched RC will benefit from deviating from its client selection.
The output of the matching game-based distributed method is the action policy $\tilde{\boldsymbol a}_m[t]$ of each $v_m\in{\cal M}$ that maximizes its utility $U_m$.
The the pseudo code to find a stable matching $\cal MG$ between the RCs and UnRCs is given in Algorithm \ref{algorithm:3}.
Furthermore, the computational complexity of {\em Stage I} in Algorithm 2 can be shown to be \ref{algorithm:3} by ${\cal O}(\max_{v_m\in{\cal M}, v_n\in{\cal N}}|{\cal N}_m^{\text{nei}}[t]|\cdot|{\cal M}_n[t]|).$
In fact, we denote the complexity of building the preference list ${\cal L}_m^{\text{rc}}[t]$ for each $v_m$ by ${\cal O}(|{\cal N}_m^{\text{nei}}[t]|\log(|{\cal N}_m^{\text{nei}}[t]|))$.
Consequently,  the complexity of building the preference list ${\cal L}_n^{\text{rc}}[t]$ at each $v_n$ is ${\cal O}(|{\cal M}_n[t]||{\cal N}_m^{\text{nei}}[t]|\log(|{\cal M}_n[t]||{\cal N}_m^{\text{nei}}[t]|))$.
So, the complexity of {\em Stage I} in Algorithm \ref{algorithm:3} is upper bounded by ${\cal O}(\max_{v_m\in{\cal M}, v_n\in{\cal N}}|{\cal N}_m^{\text{nei}}[t]||{\cal M}_n[t]|).$

\section{Simulation Results and Discussions}
\subsection{Simulation Environments and Settings}
In this section, the short- and long-term performances of the proposed centralized  ({\bf\em Steps 1-4}) and distributed methods are evaluated and compared.
To keep the complexity of the simulations tractable while considering a significantly loaded system, we focus on a circular community with an area $\pi(50\times50)\text{~m}^2$ centered around its server, where $M$ RCs are randomly distributed, each with a sensing radius of $s_{\max}=18\text{~m}$.
To model a relatively real-world mobility environment, the Gauss-Markov Mobility Mode \cite{camp2002survey} originally constructed to simulate personal communications systems \cite{liang1999predictive} is employed in our simulations.
In addition, the social relationship ${\boldsymbol W}$ can be established through online social networks, whose entries $w_{m,n}$ are quantified by the average communication frequency between the RC $v_m$ and UnRC $v_n$ \cite{gao2019dynamic}.

Based on the communication phase described in Section V.A,  it is necessary to carefully configure the simulation parameters.
First, we consider a wireless FL with carrier frequency $2.3\text{~GHz}$ and bandwidth $0.2\text{~MHz}.$
We set the path loss exponents for each individual link and the path loss at the reference distance $1\text{~m}$  to be $3$ and $30$ dBm,  respectively, and adopt  Rayleigh fading to reproduce multipath effects \cite{gao2021reflection}.
We assume that perfect channel information is known at the receivers pilot signals \cite{gao2021reflection}.
Throughout the simulations, unless otherwise specified, we adopt the parameters reported in Table \ref{table:2}, where, for simplicity, each UnRC is assumed to have the same long-term QoS requirement, i.e.,  $R_{\min}^{\text{c2c}}$, and each potential participant is also assumed to have the same computation time constraint in one local iteration, i.e.,  $\text{T}_{\max}^{\text{cmp}}$.
Moreover, all the RCs are assumed to have the same CPU frequency $f_m=f_{\text{rc}}$ and transmit power budget $p_m^{\max}=p_{\text{rc}}^{\max}$.
Likewise,  the same CPU frequency $f_n=f_{\text{urc}}$ and the maximum transmit power $p_n^{\max}=p_{\text{urc}}^{\max}$ are imposed for all UnRCs.
In addition, we set $\lambda^t={1}/{6}$ and $\lambda^e={5}/{6}$.
\begin{table}[!t]
\centering
\caption{General System Parameters}\label{table:2}
\begin{tabular}{p{0.3\textwidth}p{0.13\textwidth}}
  \hline
  {\bf Simulation Parameter} & {\bf Value}  \\
  \hline
  Total number of RCs, UnRCs, $M, N$ & $10, 60$  \\

  Total bandwidth   & $0.2\text{MHz}$\\

  Carrier frequency & $2.3\text{GHz}$\\

Transmit power budget of RCs, $p_{\text{rc}}^{\max}$& $0.5\text{W}$\\

  Transmit power budget of UnRCs, $ p_{\text{urc}}^{\max}$& $ 0.3\text{W}$\\

  CPU frequency of RCs,  $f_{\text{rc}}$& {$2\times10^8\text{cycles/byte}$}\\

  CPU frequency of  UnRCs, $f_{\text{urc}}$ &$2\times 10^7\text{cycles/byte}$\\

 Switched capacitance, $\rho_n$ &$10^{-27}$\\

  Constant value, $\zeta$ & $3$\\

  Number of training samples, $Q_i$& $10000$\\

 Processing density, $b_i$ & $10\text{cycles}$\\

Computation time constraint, $\text{T}_{\max}^{\text{cmp}}$&$0.1\text{s}$\\

\midrule

HM size, \text{HMS}& $5$\\

Number of improvisations, \text{NI}&$300$\\

Maximum distance bandwidth, $\text{BW}_{\max}$& $0.5$\\

Minimum distance bandwidth, $\text{BW}_{\min}$& $5\times 10^{-4}$\\

Mean HMCR, PAR, $\mu_\text{HMCR}, \mu_{\text{PAR}}$& $0.95$, $0.3$\\

Variance of HMCR, PAR, $\sigma_{\text{HMCR}}, \sigma_{\text{PAR}}$& $0.01, 0.05$\\
  \hline
\end{tabular}
\end{table}
\begin{figure}
\begin{minipage}[t]{0.235\textwidth}
\centering
\includegraphics[width=\textwidth]{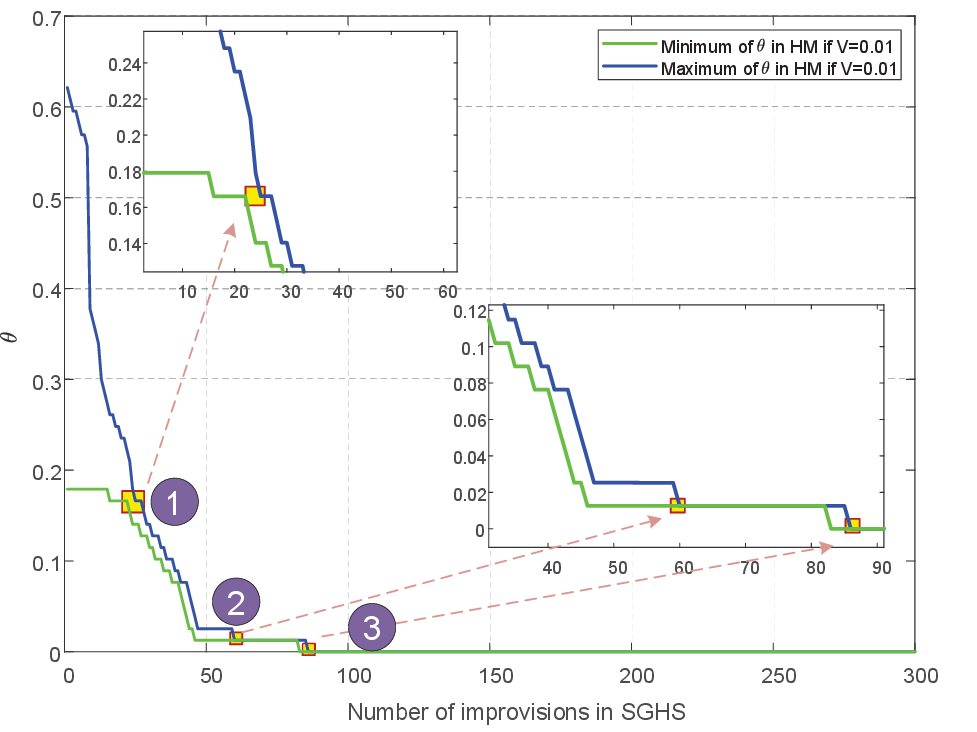}
\caption{Convergence of SGHS versus the number of improvisations.}
\label{fig:convergence}
\end{minipage}
\begin{minipage}[t]{0.235\textwidth}
\centering
\includegraphics[width=\textwidth]{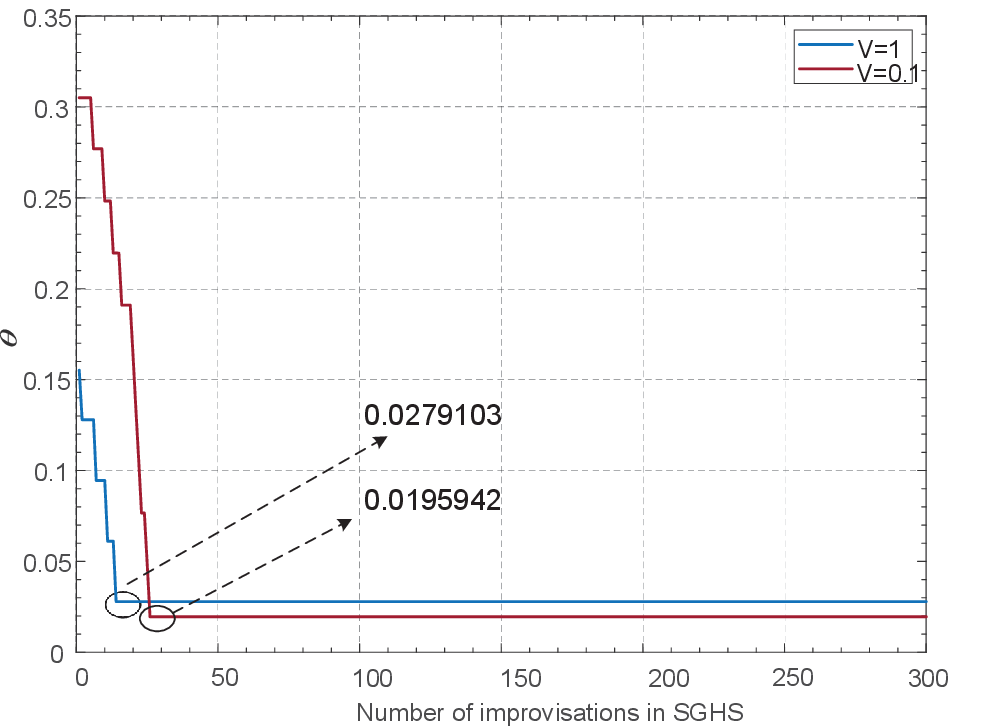}
\caption{Local model accuracy versus the number of improvisations.}
\label{fig:sghs}
\end{minipage}
\end{figure}

\subsection{Algorithm Discussion}
{\bf\em 1) Validation of convergence of SGHS:}
Fig. \ref{fig:convergence} depicts the convergence of SGHS (Alg. \ref{algorithm:1}) in one training task.
We draw the maximum and minimum values of the HM to characterize the value range and convergence of the HM through solving the unconstrained problem (\ref{add:32}) with $V=0.01$ and randomly selected feasible action policy ${\boldsymbol a}$, varying the number of improvisations from $0$ to $300$.
As can be seen from the figure,  the gap between the two lines decreases when the number of improvisations increases.
More precisely, for the \text{NI} values to the left of point \textcircled{1}, i.e., $\text{NI}\in [0, 20]$, the gap between the two curves in increasing iterations is significantly narrowed.
However, for the \text{NI} values to the right of point \textcircled{3}, it is difficult to distinguish the two curves, and they converge to $\theta^{*}=1.69236\times 10^{-5}$ quickly.
It can be shown that for the iterations between points $\textcircled{1}$ and $\textcircled{3}$, the infection points and gaps between the two curves like point $\textcircled{2}$ appear, which means that the HM is updating and adding the new $\theta$ frequently.
The effect of the balance parameter $V$ in the  Lyapunov-based algorithm on the optimal solution to the SGHS process in one global round is shown in Fig. \ref{fig:sghs}.
As can be observed from the figure, for any given feasible action policy ${\boldsymbol a}$, the optimal value of the local model accuracy increases with  $V$.
This is attributed to the terms of $VA_i^{(m)}(\boldsymbol a)$ and $VB_i^{(m)}(\boldsymbol a)$ in  (\ref{add:31}), which can be regarded as {\bf\em coefficients} for $[\log(1/\theta)/(1-\theta)]$ and $[1/(1-\theta)]$, respectively.
Specifically, when a much larger value of $V$ is provided, solving (\ref{add:32}) results in a larger optimal $\theta$, as the associated {\bf\em coefficients} of $F(\theta)$ exhibit relatively higher magnitudes.
\begin{figure}[!t]
\begin{minipage}[t]{0.265\textwidth}
\centering
\includegraphics[width=\textwidth]{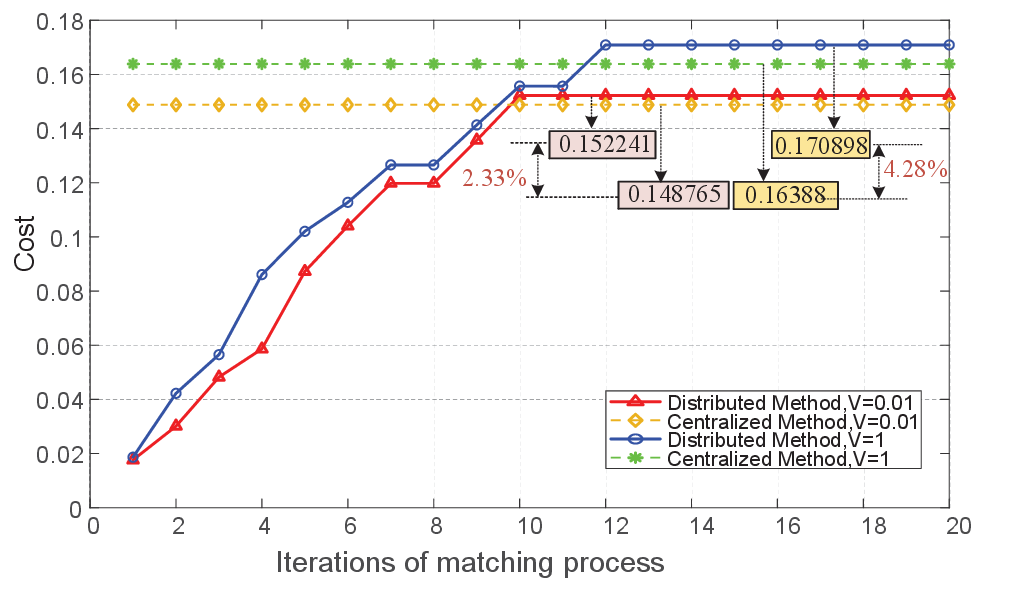}
\caption{System cost as a function of the iteration step of the matching process.}
\label{fig:gale}
\end{minipage}
\begin{minipage}[t]{0.215\textwidth}
\centering
\includegraphics[width=\textwidth]{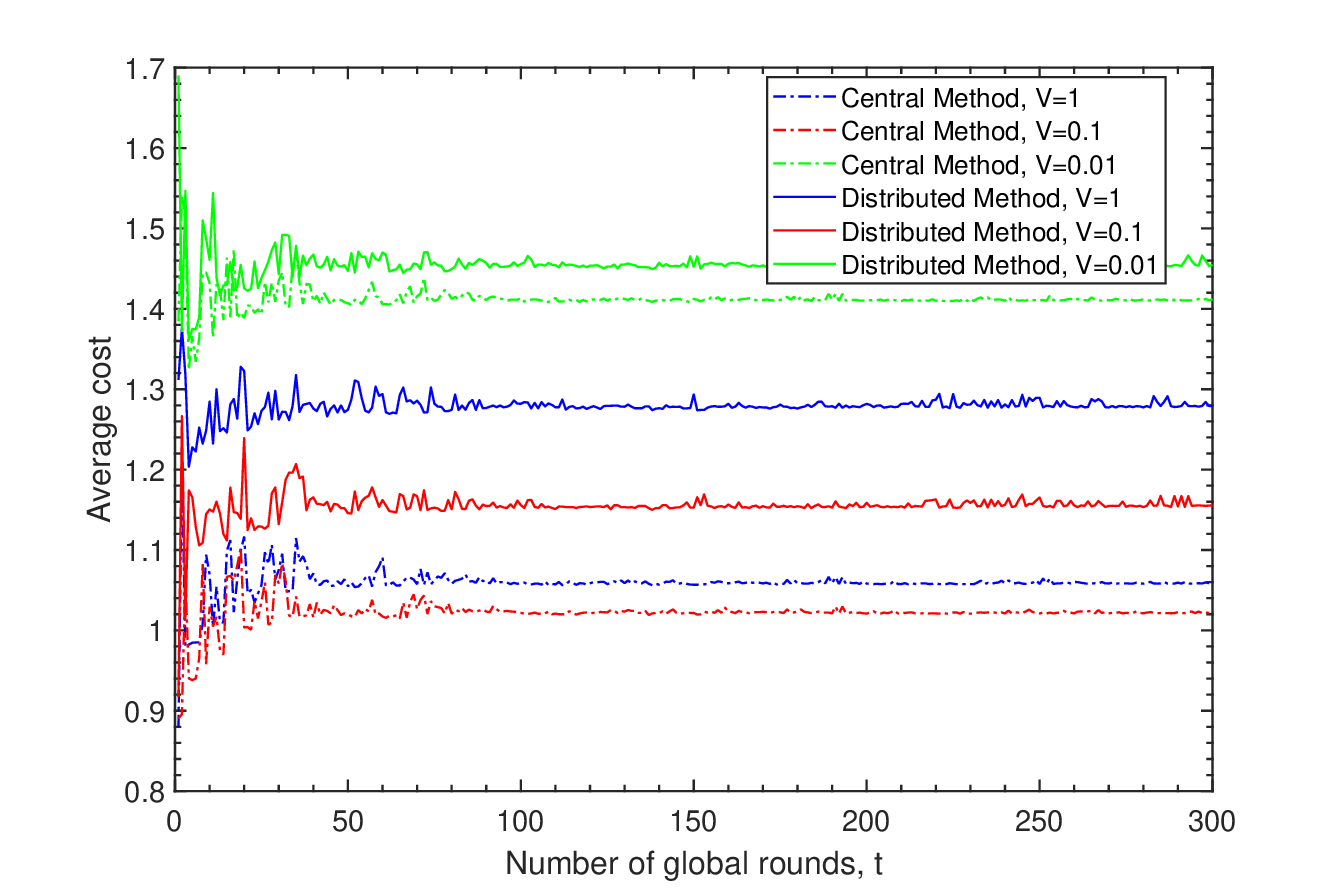}
\caption{Time-average cost versus the number of global rounds.}
\label{fig:avecost}
\end{minipage}
\end{figure}

{\bf\em 2) Tightness of the matching game-based distributed method:}
To assess the gap between  the solution yielded by the distributed method and the optimal solution obtained by the centralized method ({\bf\em Steps 1-4}),  Fig. \ref{fig:gale} plots the system cost as a function of the iteration for the matching process in Algorithm \ref{algorithm:3}.
For any given value of $V$, we first solve (\ref{add:19}) using the centralized method  and then solve the optimization problem using the distributed method under the same mentioned system settings.
As can be seen from  Fig. \ref{fig:gale}, the matching game-based distributed method can produce solutions close to the global optimal one, but the gap depends on the value of $V$.
We can observe from the results in Fig. \ref{fig:gale}
that the distributed method can perform closely to the global optimal solution with only $2.33\%$ bias within $10$ iterations for $V=0.01$.
This indicates that if the value of $V$ is set appropriately,  the proposed distributed method can yield solutions close to the optimal one rapidly.

{\bf\em 3) Lyapunov-based algorithm characteristics:}
We identify the characteristics of the Lyapunov-based algorithm with respect to the renewal times (FL announced training tasks) under different values of the balance parameter $V$.
To this end, the time-average cost performance of the centralized and distributed methods assisted Lyapunov algorithms are evaluated and compared by varying $t-1$ from $0$ to $300$.
Fig. \ref{fig:avecost} shows the impact of the balance parameter $V$ on the time-average cost for both the distributed and centralized methods assisted Lyapunov algorithms.
As expected,  for both the centralized and distributed methods, the time-average cost is asymptotically stable as it evolves over time.
In addition to our observations in Fig. \ref{fig:avecost} with respect to $t$, the performance bias between the centralized and distributed methods  increases with $V$.
This phenomenon is implied in (\ref{add:31}).
\begin{figure}[!t]
	\centering
 \begin{minipage}[t]{0.48\textwidth}
 \centering
	\includegraphics[width=1\textwidth]{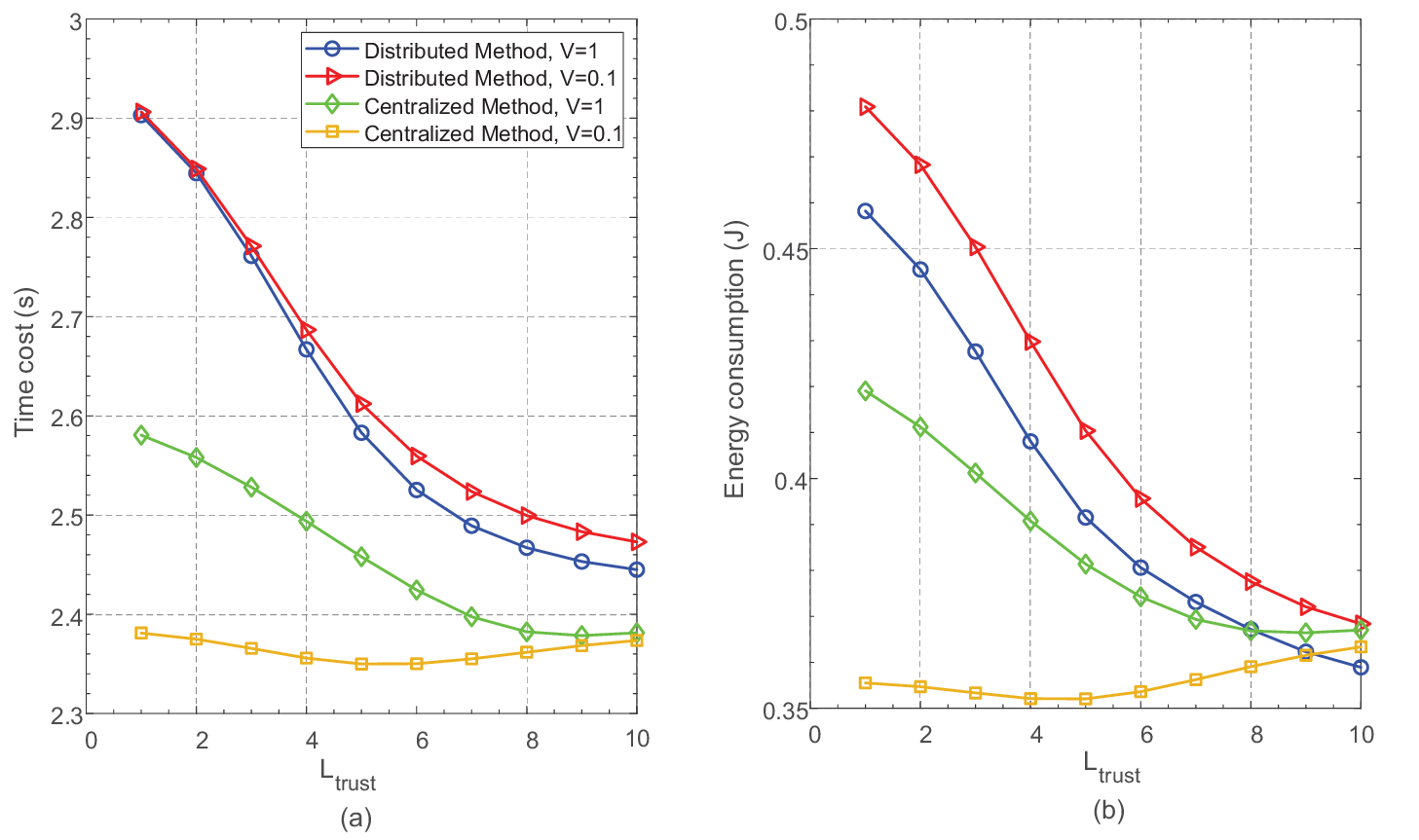}
	\caption{Time-average training time cost and  energy consumption versus $L_{\text{trust}}$.}
	\label{fig:trust}
 \end{minipage}
\end{figure}

{\bf\em 4) Impact of trust value distribution:}
To emphasize the impact of the trust distribution on the LRec-FedCS process, we define a new parameter, quantifying the clients heterogeneity regarding to trust value, as follows
\begin{align}
L_{\text{trust}}:\triangleq\frac{\min\limits_{v_m\in{\cal M}}\max\limits_{v_i\in{\cal N}_m}w_{m,i}}
{\max\limits_{v_m\in{\cal M}}\min\limits_{v_i\in{\cal N}_m}w_{m,i}}.
\end{align}
We see that higher values of $L_{\text{trust}}$ indicate higher levels of client heterogeneity.
For example, $L_{\text{trust}}=1$ can be considered as a low heterogeneity level due to balanced trust distribution, which means that the lower limit of the maximum sequence (numerator) is the same as the upper limit of the minimum sequence (denominator).
In our simulations, to vary $L_{\text{trust}}$, the trust level $w_{m,k}$ between the RCs (belong to set ${\cal M}$) and UnRCs (located in set ${\cal N}$) is generated with the fraction $\frac{\min_{v_m\in{\cal M},v_n\in{\cal N}}w_{m,n}}{\max_{v_m\in{\cal M},v_n\in{\cal N}}w_{m,n}}\in\{1, 0.9, 0.8, \ldots, 0.1\}$,  but the maximal value of the trust level $\max_{v_m\in{\cal N},v_n\in{\cal N}}w_{m,n}$ is kept at the same value one.
Figs. \ref{fig:trust}(a) and (b) show the influence of $L_{\text{trust}}$ on the training time cost and the average energy consumption for both centralized and distributed methods.
As can be seen from the figures,  the time-average training time cost and energy consumption of the proposed methods decrease for a given value of $V$, as  $L_{\text{trust}}$ increases.
This can be explained as follows.
According to the detailed expressions presented in Section V.A, both the time and energy consumption are functions of trust level, considering a given value of $p_{\text{rc}}^{\max}, p_{\text{urc}}^{\max}, B$, as well as for a given value of $f_{\text{rc}}$ and $f_{\text{urc}}$.
In addition, as the parameter $L_{\text{trust}}$ increases, it leads to a more refined segmentation of communication and computing resources.
This finer segmentation enhances the potential for achieving system cost efficiency.

\subsection{Performance Comparison}
To gain insights into the distributed and centralized methods assisted Lyapunov-based algorithms, we compare the time-average cost performance of the proposed algorithms with those of the following methods:
\begin{itemize}
\item Greedy+SGHS method: select clients according to the highest trust levels and obtain local model accuracy via SGHS.
\item Random+SGHS method: randomly select clients and obtain local model accuracy via SGHS.
\item SQoS+SGHS method: select clients from a subset of UnRCs that currently have their own QoS requirements (i.e., clients located in ${\cal N}_{\text{act},m}^{\text{tru}}[t]$) and obtain local model accuracy via SGHS.
\item Greedy+Random method: select clients according to the highest trust levels and randomly select the local model accuracy.
\item Random+Random method: randomly select clients and the local model accuracy.
\item SQoS+Random method:  select  clients from a subset of SWs that currently have their own QoS requirements and randomly select the local model accuracy.
\end{itemize}
The performance comparison is based on the assumption that the aforementioned methods use the same feasible action set that is determined by {\bf\em Step 1} at each {\em renewal time point} $\tau[t]$.
\begin{figure}
\begin{minipage}[t]{0.25\textwidth}
\centering
\includegraphics[width=\textwidth]{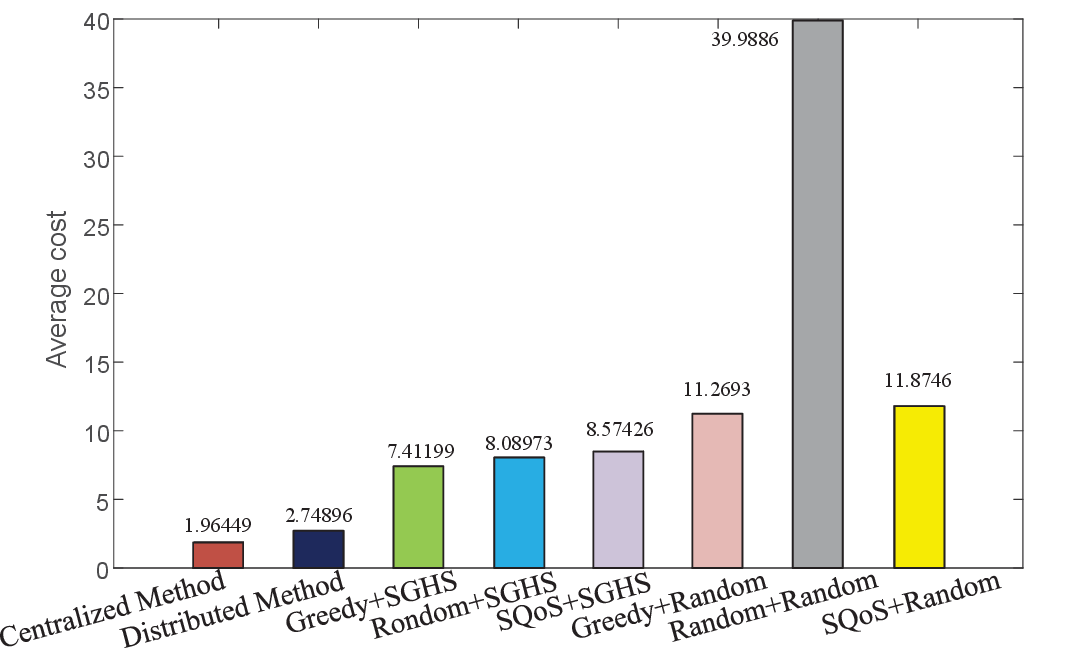}
\caption{Comparison of time-average cost for different methods.}
\label{fig:compare}
\end{minipage}
\begin{minipage}[t]{0.23\textwidth}
\centering
\includegraphics[width=\textwidth]{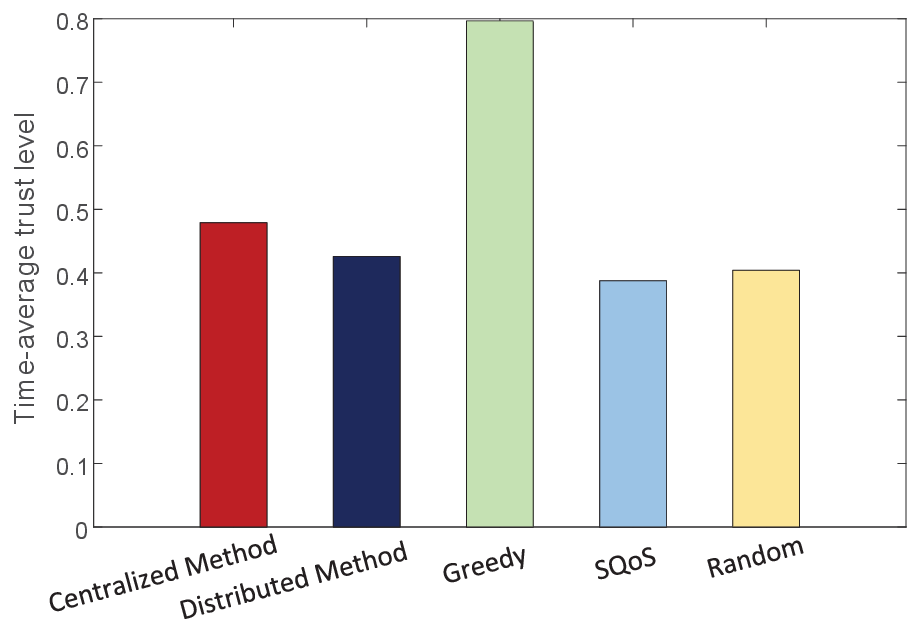}
\caption{Comparison of time-average trust level of  recommended UnRCs.}
\label{fig:compare_trust}
\end{minipage}
\end{figure}

Fig. \ref{fig:compare} compares the time-average cost achieved by all the above methods with $V=1$ and $\text{NI}=300.$
As can be observed from the figure, the proposed distributed method significantly outperforms the  methods in time-average cost other than the centralized method.
This is not only due to the difference between the local and global information, but also because of the sequential nature of the decision-making process, where the optimal solution produced by the centralized method is to solve $\min\limits_{\theta[t]}\overbrace{\min_{\boldsymbol a[t]}\max_{v_i\in{\cal N}_m[t], v_m\in{\cal M}}{\cal G}_{m,i}({\boldsymbol a}[t], \theta[t])}^{\clubsuit}$, and the distributed method is to solve $\min\limits_{\theta[t]}\overbrace{\max_{v_i\in{\cal N}_m^{\text{nei}}[t], v_m\in{\cal M}}\min_{\tilde{\boldsymbol a}[t]}{\cal G}_{m,i}(\tilde{\boldsymbol a}[t], \theta[t])}^{\spadesuit}$.
Thus, the corresponding  coefficients $A_{m,i}$ and $B_{m,i}$ in the output $F(\theta)$ of $\spadesuit$  are less than the coefficients associated with $\clubsuit$.
For the outcomes of $\clubsuit$ and $\spadesuit$ with a given value of $V$, the corresponding $VA_{m,i}, VB_{m,i}$, $C$, and $C_{m}$ are constants.
The larger the constants, the greater the incurred cost of (\ref{add:32}).
In addition, there is a long-span of average cost between the Greedy, Random, SQoS methods in conjunction with SGHS and the proposed distributed method.
This is to be expected because the distributed methods yield the smallest coefficient for the same value of $V$.
Moreover, when the local model accuracy is randomly chosen, it leads to a substantial increase in the incurred cost, which is further exacerbated by inappropriate settings of $\theta$,  as depicted Random+Random in Fig. \ref{fig:compare}.

Here, to demonstrate the superiority of the LRef-FedCS framework, we further compare the reliability performance of FL and quantify it by the time average of the successfully recommended clients' trust values.
It is clear from  Figs. \ref{fig:compare} and \ref{fig:compare_trust} that the highest FL reliability is produced by the Greedy method at the expense of the more system cost.
It is noted that, the FL reliability of the SQoS method is relatively small, which is essentially due to the fact that this method priorities the QoS of UnRCs rather than the system cost.
According to the designed trust-aided diverse resource trading principles $\Xi(\cdot)$ and $\Pi(\cdot)$ respectively described in (\ref{b:1}) and (\ref{b:2}), the proposed centralized and distributed methods  balance well between FL reliability and system cost.
According to the bandwidth segmentation of the RCs and the computational-communication resource allocation obeying the designed principles $\Xi(\cdot)$ and $\Pi(\cdot)$, respectively, both the proposed centralized and distributed LRef-FedCS strike an elegant between cost-efficiency and FL reliability.
\begin{figure}[!t]
	\centering
	\includegraphics[width=3.75in]{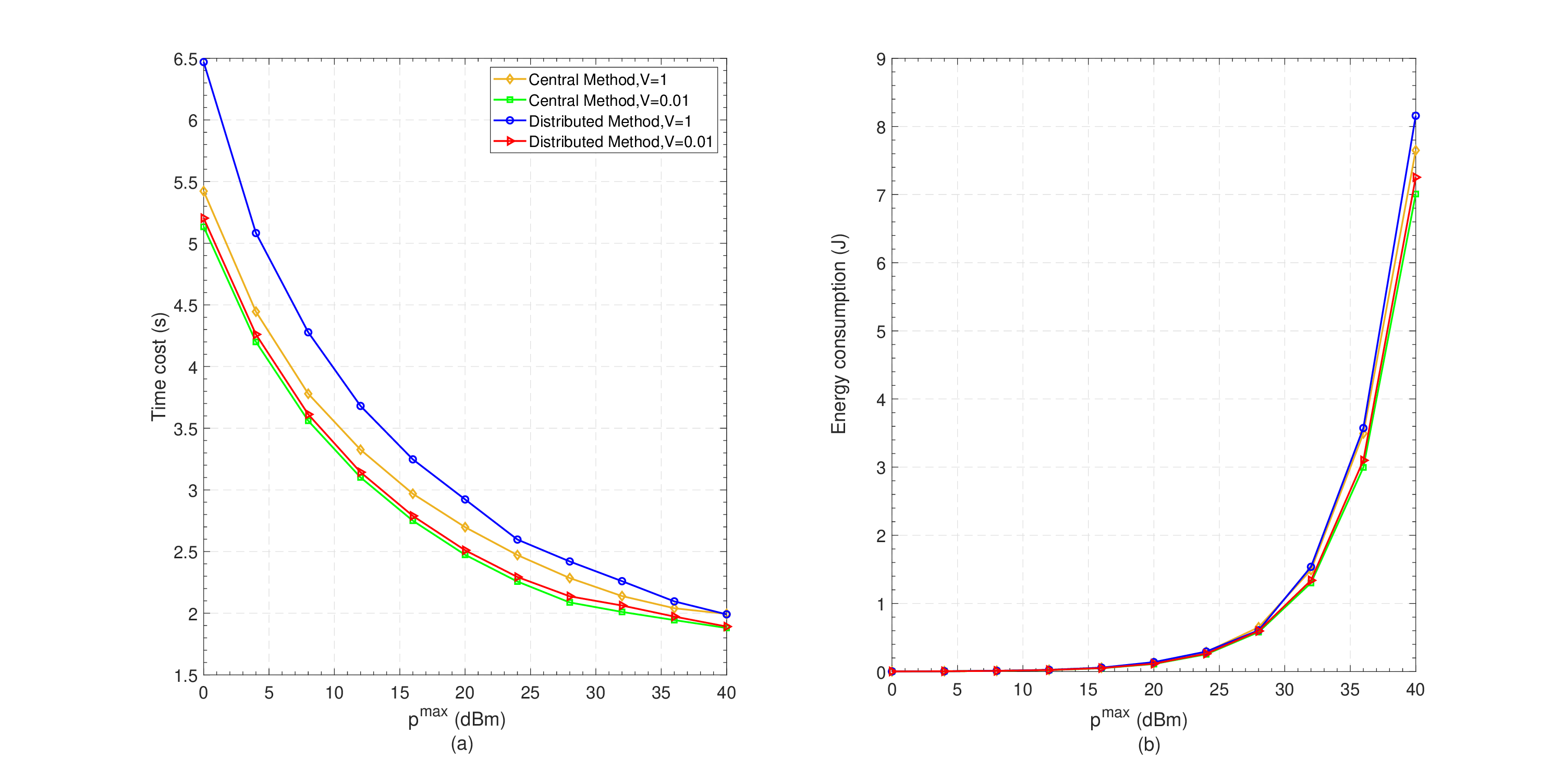}
	\caption{Time-average raining time cost and energy consumption versus the power budget of UnRCs.}
	\label{fig:pmax}
\end{figure}

\subsection{Performance Demonstration of Proposed Methods}
{\bf\em 1) Performance comparison versus  $p_{\text{urc}}^{\max}$: }
Fig. \ref{fig:pmax}(a) and (b) plot the performances of the centralized and distributed methods, respectively, with $V=1$ and $V=0.01$, and $p_{\text{urc}}^{\max}$ varied from $0\text{~dBm}$ to $40\text{~dBm}$.
Once again, we can observe that the centralized method outperforms the distributed method for$V \in \{0.01, 1\}$.
As can be seen from the figures, for both the centralized and distributed methods, the average training time (energy consumption) decreases (increases) for a given value of $V$, as the power budget $p_{\text{urc}}^{\max}$ goes up.
This observation can be explained with the aid of (\ref{l:1}), (\ref{l:2}), and (\ref{l:3}) as follows.
For a given data size,  it can be shown from (\ref{l:2}), (\ref{l:3}) and (\ref{l:4}) that the incurred uploading time monotonically decreases with the increasing maximum transmit power of UnRCs.
By contrast, the trend with respect to energy consumption depicted in Fig. \ref{fig:pmax}(b) is attributed to the exponential increase in transmit power, which dominates the ratio of the power consumption to the achieved data rate and the energy consumption increases more rapidly when $p_{\text{urc}}^{\max}$ is larger than $30\text{~dBm}$.
As can be seen from Fig. \ref{fig:pmax},  it is difficult to distinguish the curves for $V=1$ and $0.01$, when the maximum transmit power is lower than $30\text{~dBm}$.
\begin{figure}[!t]
	\centering
	\includegraphics[width=3.75in]{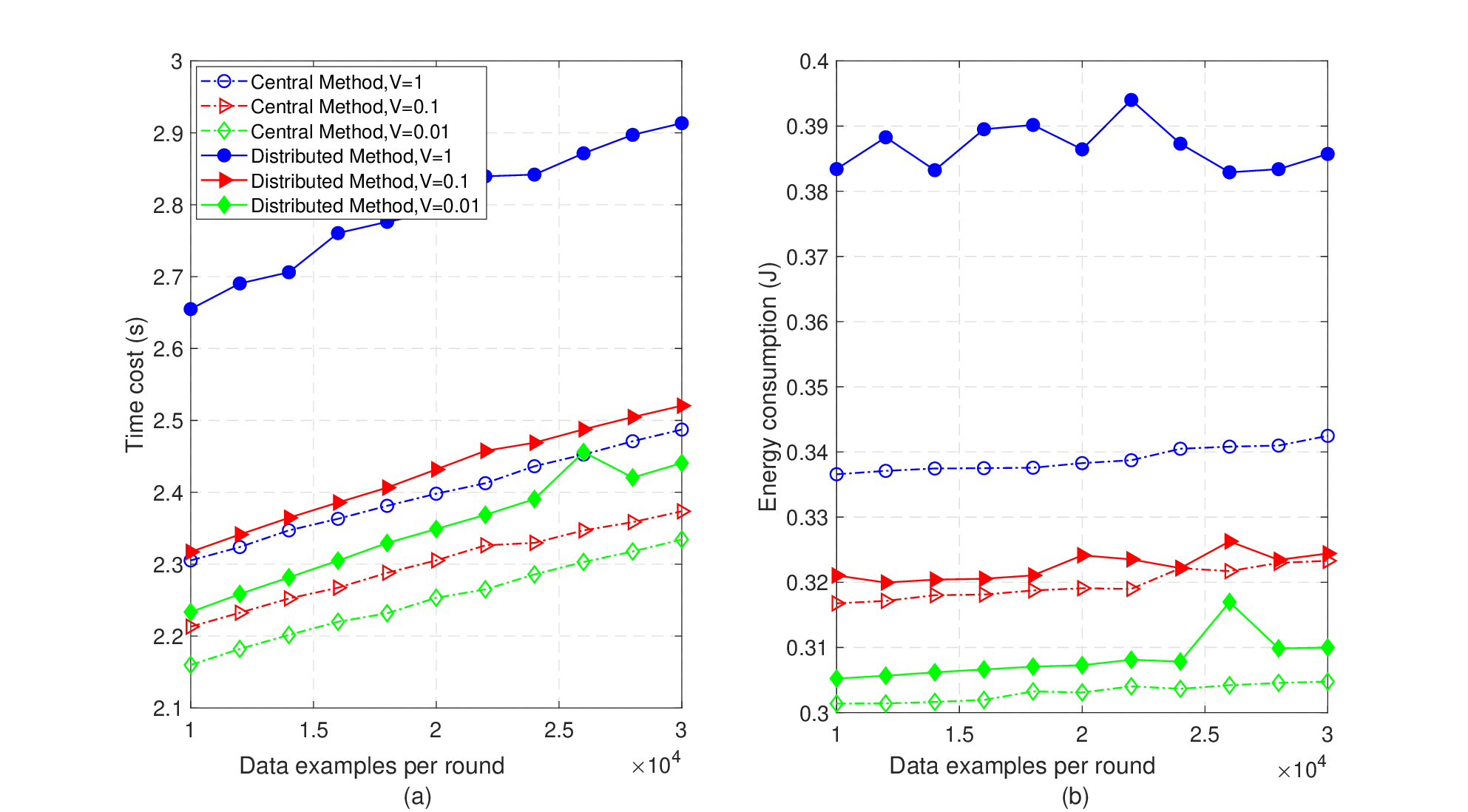}
	\caption{Time-average training time cost and  energy consumption versus data size.}
	\label{fig:datasize}
\end{figure}

{\bf\em 2) Performance comparison versus $Q$: }
Fig. \ref{fig:datasize} presents the performance of the proposed methods with $p_{\text{rc}}^{\max}=0.5\text{~W}$ and $p_{\text{urc}}^{\max}=0.3\text{~W}$, with $V=1,  0.1,$ and $0.01$ and $Q$ varied from $10^4$ to $3\times 10^4$.
As can be observed from the figure,  both the training time in Fig. \ref{fig:datasize}(a) and  the average energy consumption in Fig. \ref{fig:datasize}(b) increase as the data size of the clients goes up, since the data size is always detrimental to the unilateral system efficiency.
According to (\ref{l:6}) and (\ref{l:7}), the training time and energy consumption are almost linearly with regards to the data size.
Furthermore, it is noted that for the proposed two methods both the training time  and energy consumption performance decrease with the increase of the value of $V$.
\subsection{Implementation on the MNIST/CIFAR-10 Datasets}
Note that our proposed LRef-FedCS approach for FL in the HieIoT network can be regarded as a pre-selection component integrated with existing federated platforms.
In Figd. \ref{fig:exp1} and \ref{fig:exp}, to show the efficiency of our proposed LRec-FedCS approach, we run a real-world PyTorch model based on two standard datasets MNIST and CIFAR-10.
We train a convolutional neural network (CNN) model with two convolutional layers, a dropout layer, and two linear fully-connected layers on the MNIST dataset, which includes a training set of $60000$ images and a testing set of $10000$ images.
The CNN consists of a convolutional layer, a pooling layer, a convolutional layer, and three linear fully-connected layers and is trained on the CIFAR-10 dataset, which includes $50000$ training color images and $10000$ testing color images.
In our experiments, to simulate different quality levels of local data, we generate different percentages of noisy labels for the clients (RCs and UnRCs) based on the trust values between the RCs and UnRCs.
Specifically, each RC $v_m$ owns $100(1-w_{m,:}/w)\%$ noisy data, where $w_{m,:}=\sum_{v_n\in{\cal N}}w_{m,n}$ and $w=\sum_{v_m\in{\cal M}, v_n\in{\cal N}}w_{m,n}$.
Likewise, each UnRC $v_n$ owns $100(1-w_{:,n}/w)\%$ noisy data with $w_{:,n}=\sum_{v_m\in{\cal M}}w_{m,n}$.
Figs. \ref{fig:exp1} and \ref{fig:exp} draw the global accuracy of the eight methods mentioned in Section VIII.C.
It can be seen from the two figures that our designed centralized and distributed methods outperform the other competing methods other than the greedy method.
However, the aggressive attitude towards global model accuracy exhibited by the greedy method can lead to catastrophic cost increase at the client level, thereby resulting in faster battery depletion and higher cost.
It is clear from Figs. \ref{fig:exp1}-\ref{fig:exp}  and Fig. \ref{fig:compare} that the solutions yielded by our proposed centralized and distributed LRef-FedCS methods with trust-aided resource scheduling is able to balance well between learning quality and cost.
\begin{figure}[!t]
\begin{minipage}[t]{0.235\textwidth}
\centering
\includegraphics[width=1\textwidth]{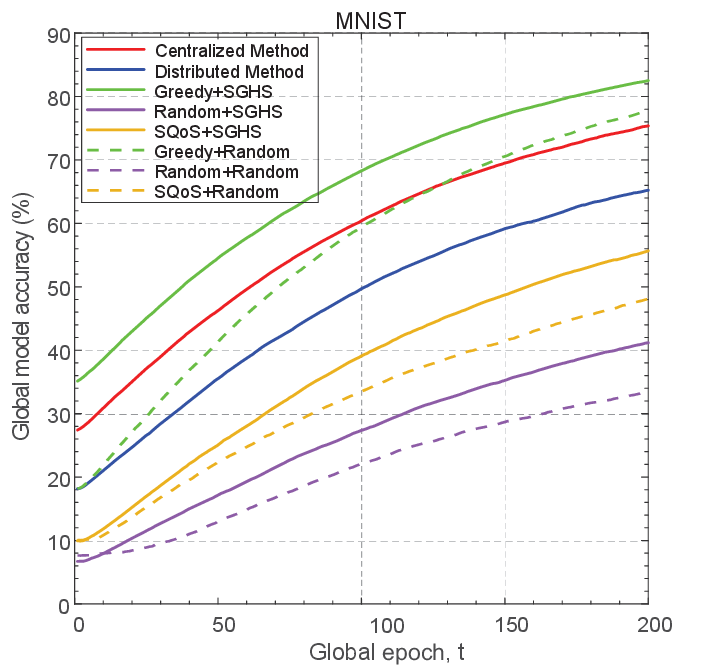}
\caption{Global accuracy on MNIST dataset.}
\label{fig:exp1}
\end{minipage}
\begin{minipage}[t]{0.235\textwidth}
\centering
\includegraphics[width=1\textwidth]{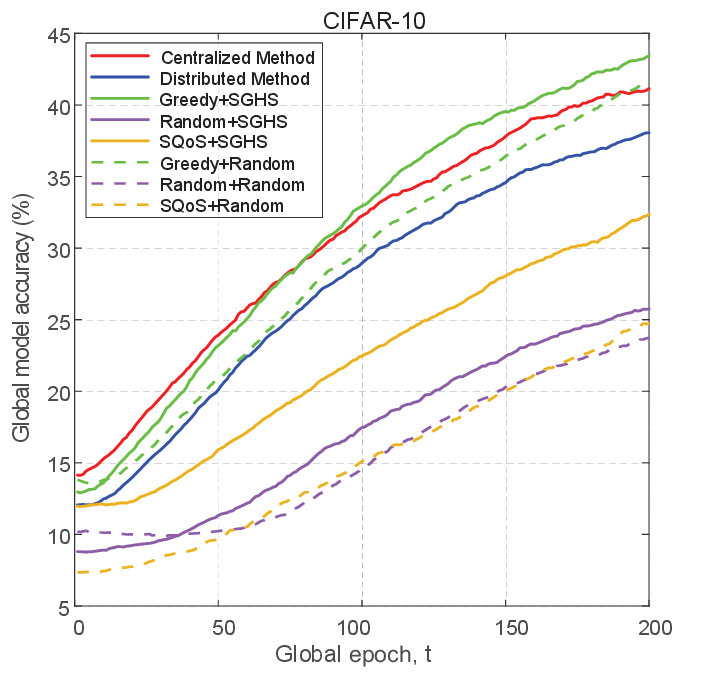}
\caption{Global accuracy on CIFAR-10 dataset.}
\label{fig:exp}
\end{minipage}
\end{figure}
\section{Conclusion}
In this paper,  we proposed the LRef-FedCS approach for FL in the HieIoT network built through social relationships, which yields trustworthy and cost-effective client selection policies.
The problem was formulated into an infinite-horizon renewal problem subject to time-average fairness constraints.
Drawing on the Lyapunov optimization technique, we first transformed the time-coupled problem into a step-by-step SWET-based min-max optimization problem.
To tackle the mixed-integer optimization problem, we proposed two two-stage iterarive approaches, by employing centralized search, matching game, and SGHS algorithm.
In particular, the first algorithm employs specific centralized search for LRef-FedCS, and SGHS for LMAO.
Instead, the second algorithm can realize autonomous client selection via the matching game process.
Simulation results were presented to demonstrate that the proposed LRef-FedCS approach with the designed trust-aided resource scheduling is capable of striking  a good balance between high learning quality and low system cost.

\section*{Appendix A}
We characterize the performance of the stable solution produced by Algorithm \ref{algorithm:3} in each time slot $t$.
As the objective function of (\ref{add:32}) is used to minimize the maximum utility associated with all RCs, the outcome of Algorithm \ref{algorithm:3} in each iteration $r$ of each time slot $t$ can be mapped to the joint LRef-FedCS and local model accuracy optimization vector $(\tilde{\boldsymbol a}, \theta)$ as $({\cal MG}_r[t], \theta_r[t])\mapsto (\tilde{\boldsymbol a}_r[t], \theta_r[t]).$
For a given $\theta_{r-1}[t]$, based on the accept and reject operation at each iteration $r$ of each time slot $t$ in Algorithm \ref{algorithm:3}, the matching ${\cal MG}$ captures the minimum value of $-U_m(\tilde{\boldsymbol a}_r[t], \theta_{r-1}[t])$ among all RCs, in which the matching at the $r\text{th}$ iteration guarantees that $\min_{v_m\in{\cal M}} \{-U_m(\tilde{\boldsymbol  a}_r[t], \theta_{r-1}[t])\}\leq \min_{v_m\in{\cal M}} \{-U_m(\tilde{\boldsymbol  a}_{r-1}[t], \theta_{r-1}[t])\}$.
Next, with a large \texttt{NI} for the SGHS algorithm (Stage II) of Algorithm  \ref{algorithm:3}, the objective function of (\ref{add:32}) is monotonically non-increasing after each iteration, mathematically expressed as $F(\theta_r[t])\leq F(\theta_{r-1}[t]).$
Therefore, Algorithm \ref{algorithm:3} is guaranteed to converge to a local solution of the optimization problem (\ref{add:32}).


\end{document}